\documentclass[runningheads]{llncs}

\usepackage{eccv}

\usepackage{eccvabbrv}

\usepackage{graphicx}
\usepackage{booktabs}

\usepackage[accsupp]{axessibility}  

\usepackage{booktabs}
\usepackage{tabularx}
\usepackage{array}
\usepackage{arydshln}
\usepackage{wrapfig}
\usepackage{threeparttable}
\usepackage{float}

\newcolumntype{Y}{>{\raggedright\arraybackslash}X}
\newcolumntype{C}[1]{>{\centering\arraybackslash}p{#1}}
\newcolumntype{L}[1]{>{\raggedright\arraybackslash}p{#1}}

\usepackage[breaklinks,colorlinks,citecolor=eccvblue]{hyperref}

\usepackage{orcidlink}

\usepackage{url}
\usepackage{kotex}
\usepackage{multirow}
\newcommand{\red}[1]{\textcolor{red}{#1}}

\begin{document}


\title{Breaking High Confidence: Practical Face Impersonation under High-Security Thresholds}

\titlerunning{Practical Face Impersonation under High-Security Thresholds}

\author{Changjin Kim\orcidlink{0009-0008-5829-428X} \and
Seunghun Paik\orcidlink{0000-0003-1105-1607} \and
Dongsoo Kim\orcidlink{0009-0007-4341-873X} \and
Jae Hong Seo\thanks{Corresponding author.}\orcidlink{0000-0003-0547-5702}
}

\authorrunning{C.~Kim et al.}

\institute{Department of Mathematics \& Research Institute for Natural Sciences \\
Hanyang University, Seoul 04763, Republic of Korea
\email{\{changjinkim,whiteseoonguh,frds37,jaehongseo\}@hanyang.ac.kr}\\
}

\maketitle

\begin{abstract}

Face recognition systems (FRSs) are increasingly deployed in critical real-world services for authentication, such as banking applications and airport identity checks, necessitating stringent security configurations. Consequently, the security vulnerabilities of FRSs have garnered significant attention. While existing studies have extensively explored FRS security, prior analyses have primarily focused on medium-security threshold settings, which are not directly applicable to FRSs operating under high-security constraints. In this paper, we propose the first successful impersonation attack against FRSs under high-security threshold settings. Among various threat models, we focus on a practical and challenging scenario: score-based impersonation attacks under strict rate limits. To precisely evaluate the feasibility of such attacks, we provide a principled mathematical analysis characterizing the gaps in each stage of the attack pipeline. Our method significantly enhances impersonation capabilities in score-based attacks, even under elevated decision thresholds. On the LFW benchmark, with a budget of only 100 confidence score queries per identity, our attack achieves an impersonation success rate exceeding 92\% against Amazon Rekognition at a confidence score threshold of 99—recommended setting for law enforcement scenarios. We further observe consistently robust performance across multiple open-source FRSs evaluated at similarly stringent decision thresholds.
  \keywords{High-Security \and Impersonation Attack \and Commercial APIs}
\end{abstract}


\section{Introduction}\label{sec:intro}

Biometric recognition has become a common way to verify identity in everyday life, and its convenience has driven rapid progress and widespread adoption across diverse applications. 
In particular, face recognition is now a widely used authentication method for identity verification across a broad range of services beyond personal devices to public infrastructure—for example, U.S. agencies deploy facial biometrics in airport identity checks~\cite{tsa_facial_comparison_factsheet}.
It is also increasingly used in financial onboarding, such as remote account opening and eKYC verification~\cite{eba_remote_onboarding_2022}.


As deployments expand into high-stakes domains discussed above, reducing false matches becomes critical (often at the cost of rejecting more legitimate users). 
Thus, practical deployments in such settings frequently adopt conservative decision thresholds. 
In a face recognition system (FRS), decision thresholds are set based on a false match rate (FMR), which means the ratio of non-matching attempts that are incorrectly accepted as matches, and adversaries against FRSs that aim to induce false matches are referred to as impersonation attacks. 
As a representative benchmark for high-security decision thresholds, NIST FRTE (1:1 verification) reports performance using $\mathrm{FMR}=10^{-6}$~\cite{nist_frte_11}. In commercial API settings, AWS Rekognition’s public-safety (law-enforcement) guidance recommends using a high confidence threshold ($99\%$ or higher)~\cite{aws_public_safety_guidance}. 

Impersonation attacks against FRSs have been widely studied.
They are commonly categorized in terms of the feedback type from the target FRS through querying faces: template-based attacks~\cite{razzhigaev2020black,mai2018reconstruction,dai2026clip,dong2023reconstruct,jung2024face} that obtain a feature template from queries, and score-based attacks~\cite{razzhigaev2021darker,razzhigaev2025inverting,park2023towards,kim2024scores} that obtain similarity feedback (e.g., cosine similarity or confidence score) with the enrolled face.
These attacks have demonstrated effectiveness against various FRSs, including even commercial APIs such as Face++~\cite{FacePlusPlus}, Kairos~\cite{Kairos}, and AWS~\cite{AWSRekognition}.
Typical commercial APIs return the confidence score in response to queries, without publicly disclosing any internal structure or formula for computing the confidence score.
Under such settings, score-based attacks in the black-box setting---i.e., no internal knowledge of the target FRS is available to the adversary---have gained attention as a plausible threat against real-world FRS deployments.

\begin{table*}[t]
\centering
\caption{Summary of prior attacks and settings. All attack success rate (ASR) values are computed using the LFW evaluation protocol. We denote by AWS($k$) the AWS Rekognition confidence score threshold of $k$ between 0-100.}
\label{tab:prior_vs_ours}
\vspace{-10pt}
{\scriptsize
\setlength{\tabcolsep}{3.5pt}
\renewcommand{\arraystretch}{1.08}

\begin{tabularx}{\textwidth}{L{2.7cm} C{1.5cm} C{1.3cm} C{2.2cm} C{3.5cm} C{0.05cm} Y}
\toprule
Method & Access & Query budget & ASR@FMR (Mean Cos. Sim.) &
Notes \\
\midrule

Shahreza et al.~\cite{shahreza2022face} & white-box & -- & $94.90@\red{10^{-3}}$ \\
Shahreza et al.~\cite{shahreza2023comprehensive} & white-box & -- & $96.40@\red{10^{-3}}$ & 3D face reconstruction \\
Shahreza et al.~\cite{otroshi2023face} & white-box & -- & $96.40@\red{10^{-3}}$ & high-resolution \\
Shahreza et al.~\cite{shahreza2024vulnerability} & white-box & -- & $96.48@\red{10^{-3}}$ \\
\midrule

\multicolumn{7}{l}{\hspace{-5pt}\textbf{Template-based}} \\
Mai et al.~\cite{mai2018reconstruction} & black-box & -- & $95.20@\red{10^{-3}}$ \\
Dai et al.~\cite{dai2026clip} & black-box & -- & $94.35@\red{10^{-3}}$ & high-resolution \\
Dong et al.~\cite{dong2023reconstruct} & black-box & -- & $98.00@\red{10^{-3}}$ \\
Jung et al.~\cite{jung2024face} & black-box & -- & $73.04@\red{10^{-4}}$  \\
\hdashline
Razzhigaev et al.~\cite{razzhigaev2020black} & black-box & \red{300,000} & $(0.90)$ &  \\

\midrule
\multicolumn{7}{l}{\hspace{-5pt}\textbf{Score-based}} \\
Razzhigaev et al.~\cite{razzhigaev2021darker} & black-box & \red{50,000} & $(0.98)$ &  \\
Razzhigaev et al.~\cite{razzhigaev2025inverting} & black-box & \red{20,000} & $(0.97)$ &  \\
Park et al.~\cite{park2023towards} & black-box & \red{4,000} & $96.25@\red{10^{-3}}$ \\
Kim et al.~\cite{kim2024scores} & black-box & 100 & $99.60@0$ & \red{$13.9@\textrm{AWS}(90)$}\\ \midrule
\textbf{Ours} & black-box & 100 & \textcolor{blue}{$92.8@\textrm{AWS}(99)$} \\
\bottomrule
\end{tabularx}
}
\vspace{-15pt}
\end{table*}

Despite these advancements, however, we find that prior works have mainly analyzed medium-security settings (e.g., $\mathrm{FMR}=10^{-3}$), and the analysis for high-security regimes (e.g., $\mathrm{FMR}=10^{-6}$) has been less explored.
Table~\ref{tab:prior_vs_ours} provides representative prior work in terms of feedback type, query budget, and the attack success rate (ASR) at the given FMR. 
We can observe that all these works either require unrealistically large query budgets (4,000---300,000) for realistic deployments with rate limits, or the ASR sharply drops at high-security threshold settings.
Such a landscape leaves a critical yet underexplored regime on the security analysis of FRSs in high-security settings, despite the increasing adoption of real-world FRS deployments under such scenarios.

\subsection{Our Contributions}

In this paper, we analyze the security of FRSs in high-security settings (i.e., $\mathrm{FMR}=10^{-6}$ or a confidence score threshold of 99) by proposing a black-box score-based attack, revealing their vulnerability for the first time under a practical query budget of at most 100.
We first systematically analyze existing score-based attacks that operate under low query budgets, identifying multiple bottlenecks that limit ASRs in stringent threshold settings.
Through a theoretical analysis of the geometry of the template space, we propose a series of techniques to address these bottlenecks, thereby achieving a non-trivial ASR in various open-source and commercial FRSs.
Notably, using only 100 black-box score queries, our attack achieves an ASR of 92.8\% against the AWS Rekognition API~\cite{AWSRekognition} at a threshold of 99 on the LFW dataset~\cite{huang2008labeled}, where prior work failed to achieve even a non-negligible ASR under such a strict query budget.
Our findings suggest that system-level high-security configurations alone may not suffice to ensure security in adversarial settings, highlighting the need for a better understanding of the inherent robustness of FRSs in such high-security regimes. 
We summarize our contributions as follows:

\begin{itemize}
    \item We conduct the first security analysis of FRSs under high-security thresholds, focusing on practical black-box score-based impersonation.

    \item We revisit prior query-efficient score-based attacks, demonstrating why it fails under high-security settings through systematic analyses.

    \item By analyzing geometric properties of the template space, we propose multiple techniques to enable such attacks in high-security settings, whose effectiveness is supported by both theoretical and experimental analyses.
    
    \item We conduct extensive experimental analyses, revealing vulnerabilities of various open-source and commercial FRSs in high-security threshold settings using at most 100 queries.
    Our attack achieves high ASRs of 92.8\% (LFW~\cite{huang2008labeled}), 95.24\% (CFP-FP~\cite{sengupta2016frontal}), and 96.87\% (AgeDB~\cite{moschoglou2017agedb}) on AWS Rekognition at a confidence score threshold of 99.
    It also achieves 32.63\%--89.27\% on open-source FRSs at $\mathrm{FMR}=10^{-6}$ thresholds.
\end{itemize}

\section{Related Work}


\subsubsection{Score-based Attacks}
%
To impersonate the enrolled identity from the target system's score responses, several studies consider gradient-free optimization techniques, such as hill-climbing~\cite{galbally2010vulnerability,razzhigaev2020black,razzhigaev2021darker,vendrow2021realistic}, genetic algorithms~\cite{dong2023reconstruct,jung2024face,an2023imu,sharif2016accessorize}, or gradient estimation~\cite{park2023towards,razzhigaev2025inverting}.
Recent studies exploit the generative model's latent space to recover high-fidelity facial images~\cite{vendrow2021realistic,dong2023reconstruct,park2023towards,an2023imu,jung2024face}.
While these attacks recover high fidelity as the number of score queries increases, they require a huge number of queries (4,000--300,000).
Furthermore, as demonstrated in several studies~\cite{li2022blacklight,park2025mind}, their adaptive query trajectories often leave structured footprints that are detectable by the target FRS service provider, enabling system-level defenses such as query filtering or request rejection.

Notably, a recent study~\cite{kim2024scores} demonstrated that 100 non-adaptive score queries suffice to achieve a non-trivial ASR on various commercial APIs, which is based on the geometry of template space.
However, their experimental setting is about scenarios with moderate security, e.g., a default threshold in AWS; for thresholds of high-stakes scenarios, their ASR drastically diminishes to 0.

\subsubsection{Template-based Attacks}
From the success of the NbNet~\cite{mai2018reconstruction} in the black-box attack setting, several template inversion attacks have been proposed with various methodologies.
They often assume that the adversary can query the chosen face to the target FRS, obtaining the template extracted from the face.
Under this assumption, they train the inverse model that reconstructs faces from the templates.
Earlier attacks tried to build such a model directly~\cite{mai2018reconstruction, duong2020vec2face, truong2022vec2face}.
Recently, the advancement of generative models, e.g., StyleGANs~\cite{karras2019style,karras2020analyzing} or diffusion models~\cite{ho2020denoising}, facilitates the attack, enabling the adversary to recover high-fidelity facial images from these models. 
Several studies attempted to leverage the latent space of generative models, e.g., training an adaptor network from the template to the latent vector~\cite{otroshi2023face,shahreza2025face,dai2026clip}.
However, they require the adversary to obtain compromised templates enrolled in the target FRS, e.g., through a data breach, which is considered a stronger assumption than score-based attacks.

\section{Backgrounds and Problem Formulation}


%

\subsection{Face Recognition}
Typical FRSs employ the feature extractor that returns a fixed-length feature vector---called \textit{face template}---from the input face.
The feature extractor serves as a distance-preserving mapping, i.e., faces from the same identity are mapped to face templates close to each other, or vice versa.
FRS determines whether two facial images represent the same identity by computing the similarity score between templates; if it exceeds a threshold, they are recognized as the same person.
Recent FRSs~\cite{schroff2015facenet,deng2019arcface,meng2021magface,kim2022adaface,boutros2022elasticface,kim2024keypoint,you2025lvface} represent face templates as a unit vector and measure their similarity scores through cosine similarity.

As the demand for face recognition-based applications increases, several commercial API services, such as AWS Rekognition~\cite{AWSRekognition}, Tencent~\cite{TencentCloud}, Kairos~\cite{Kairos}, and Face++~\cite{FacePlusPlus}, have been adopted in various FRS deployments~\cite{AWSUseCases,TencentUseCases}.
The user can enroll the face in these services, and then query the face to obtain the confidence score with the previously enrolled one.
Confidence scores are used for determining whether the enrolled and queried faces are from the same person, and the formula for computing the confidence score is publicly unknown.


\subsection{Score-based Face Impersonation Attack}

We consider two entities, the adversary and the target FRS where the target face is enrolled.
Following prior score-based impersonation attacks~\cite{vendrow2021realistic,razzhigaev2020black,razzhigaev2021darker,razzhigaev2025inverting,dong2023reconstruct,kim2024scores}, the adversary can query its own face to the target FRS and obtain a confidence score.
We consider the \textit{black-box} scenario; the adversary does not know the internal information of the target FRS, including the parameters, architecture, and training data of the feature extractor, and the decision threshold.
Any prior information on the enrolled identity is unknown to the adversary.
In practical scenarios with commercial APIs, the number of score queries made by the adversary is limited due to API rate limits and financial costs~\cite{ilyas2018black,kim2024scores}.
Hence, we assume a query-limited adversary with a small query budget, e.g., tens to hundreds.
Under this setting, the adversary aims to craft a facial image that is recognized as the same identity.



\section{Revisiting Prior Low-Query Score-based Attacks}
Motivated by this limitation, we revisit a recent score-based attack under a limited query budget~\cite{kim2024scores} as our baseline attack method.
We systematically analyze several errors involved during the attack and identify why the attack becomes less effective when attacking FRSs under stringent threshold settings.


\subsection{High-Level Idea of the Attack}\label{sec:highlevel}

The core insight of their attack is that well-trained face recognition models behave as almost isometries, i.e., their embedding spaces exhibit similar geometric structures in terms of pairwise similarity.
They empirically verify this assumption across various open-source FRSs.
They also observed that each score query reveals the target template's relative position within the embedding space.
Thus, given a surrogate face recognition model, the adversary can estimate the target template's position in the surrogate model's embedding space with score queries.

Precisely, if we denote $s_{i}$ as the score from $i^{\text{th}}$ face image query, then one can write $s_{i} = \langle x_{i}, y \rangle +\epsilon_{i}$ for the templates $x_{i}, y \in \mathbb{S}^{d-1}$ of queried and enrolled faces, respectively.
Here, $\epsilon_{i}$ indicates the error inherited from the mismatch between the target and surrogate models' embedding spaces, which is unknown to the adversary.
That is, after $Q$ queries, the adversary obtains an equation $Ay = s-\epsilon$, where $A$ is a $Q \times d$ matrix whose $i^{\text{th}}$ row is $x_{i}$ and $s=(s_{i})_{i=1}^{Q},\epsilon=(\epsilon_{i})_{i=1}^{Q} \in \mathbb{R}^{Q}$.
Since this equation is underdetermined and the errors are unknown, the adversary utilizes the least square method to estimate $y$ while ignoring errors, i.e., computing $\widehat{y} \gets A^{\dagger}s$, where $A^{\dagger}$ denotes the pseudoinverse of $A$.
Finally, the adversary can recover the candidate facial image of the enrolled identity by inverting $\widehat{y}$ through template inversion methods, i.e., inverse models, for the surrogate model~\cite{mai2018reconstruction,duong2020vec2face,otroshi2023face,shahreza2025face,shahreza2024template}. 
A detailed algorithm is provided in Section~A.1 of the supplementary material.


\subsubsection{Key Techniques to Instantiate the Attack}
To realize the attack, they proposed two techniques.
The first one is the orthogonal face set (OFS), a set of facial images whose templates are (almost) pairwise orthogonal in the embedding space.
They showed that the orthogonality helps reduce the effect of errors $\epsilon_{1},\dots,\epsilon_{Q}$ after being multiplied by $A^{\dagger}$.
In addition, since commercial APIs return confidence scores rather than cosine similarities, they propose an estimation method of the cosine similarity from the confidence score via numerical methods.

\subsection{Investigating Errors from Approximations}\label{sec:investigateerrors}

While elegant, the aforementioned approach relies on several approximations.
Specifically, for high-threshold settings, the attack's efficacy becomes much more sensitive to these approximations, as the face reconstructed by the adversary must be accurate enough to surpass the tight decision threshold.
For this reason, we investigate and quantify the impact of these approximations on the precision of the recovered face and their consequences for the success of the attack.


%
Throughout the attack pipeline, the error occurs when (i) interpreting score queries as cosine similarities (\textbf{E1}), (ii) solving the system of linear equations through pseudoinverse (\textbf{E2}), and (iii) the reconstruction error from the template inversion (\textbf{E3}).
To formalize this, let us denote $z$ as the template of the final reconstructed template, and $A$, $y$, $s$, $\widehat{y}=A^{\dagger}s$, and $\epsilon$ follow the same notation as in Section~\ref{sec:highlevel}.
Recall that $Ay = s - \epsilon$.
Then we have that 
\begin{align}\label{eq:errors}
    \|y - z\|_{2} \le \underbrace{\|y - A^{\dagger}(s-\epsilon) \|_{2}}_{(\textbf{E2})} + \underbrace{\|A^{\dagger}(s-\epsilon) - \widehat{y} \|_{2}}_{(\textbf{E1})} + \underbrace{\|\widehat{y} - z\|_{2}}_{(\textbf{E3})}.
\end{align}

We now analyze each error as follows.
First, since $\widehat{y} = A^{\dagger}s$, the error (\textbf{E1}) corresponds to $\|A^{\dagger}\epsilon\|_{2}$.
If we denote the operator norm of $A^{\dagger}$, denoted by $\|A^{\dagger}\|$, then $\|A^{\dagger}\epsilon\|_{2} \le \|A^{\dagger}\| \cdot \|\epsilon\|_{2} = \sigma_{\min}(A)^{-1} \cdot \|\epsilon\|_{2}$,
where $\sigma_{\min}(A)$ denotes the smallest singular value of $A$.
This observation is essentially identical to the analysis by~\cite{kim2024scores} based on the condition number, which justifies the use of OFS.
The error (\textbf{E2}) corresponds to the gap between $y$ and its projection onto the row space of $A$.
Finally, the error (\textbf{E3}) depends on the template inversion method.

\subsubsection{Quantifying the Errors in~\cite{kim2024scores}}
%
To quantify the errors (\textbf{E1})--(\textbf{E3}), we design the following experiments that measure the cosine similarity between the target and reconstructed faces.
We follow the same attack algorithm in~\cite{kim2024scores}.
\begin{itemize}
    \item (Exp. 1) Black-box attack: errors (\textbf{E1})-(\textbf{E3}) affect.

    \item (Exp. 2) White-box attack: errors (\textbf{E2}) and (\textbf{E3}) affect.
    
    \item (Exp. 3) White-box attack with the ideal inversion: only error (\textbf{E2}) affects.

    \item (Exp. 4) White-box attack with the ideal inversion + PCA subspace: error (\textbf{E2}) remains, but its amount is minimized due to the optimality of the PCA subspace in terms of (average) projection gap.
\end{itemize}

We use the same surrogate and inverse models as in~\cite{kim2024scores}, and for the black-box attack, we select the ViT-based FRS~\cite{kim2024keypoint} as the target model\footnote{The surrogate and target models correspond to $F_{\mathsf{S}}$ and $F_{1}$ in Tab.~\ref{tab:summary(FRSs)}, respectively.}, whose training dataset, architecture, and loss differ from the surrogate model.
We run the PCA on the embeddings of the MS1MV3 dataset~\cite{deng2020retinaface} extracted from the surrogate model and select the top 100 components with the highest singular values. 
We use the LFW dataset~\cite{huang2008labeled} as the target enrolled faces.
To simulate the high-threshold scenario, we provide an estimate of the threshold for $\mathrm{FMR}=10^{-6}$.

\begin{figure}[t]
    \centering
    \includegraphics[width=.95\linewidth]{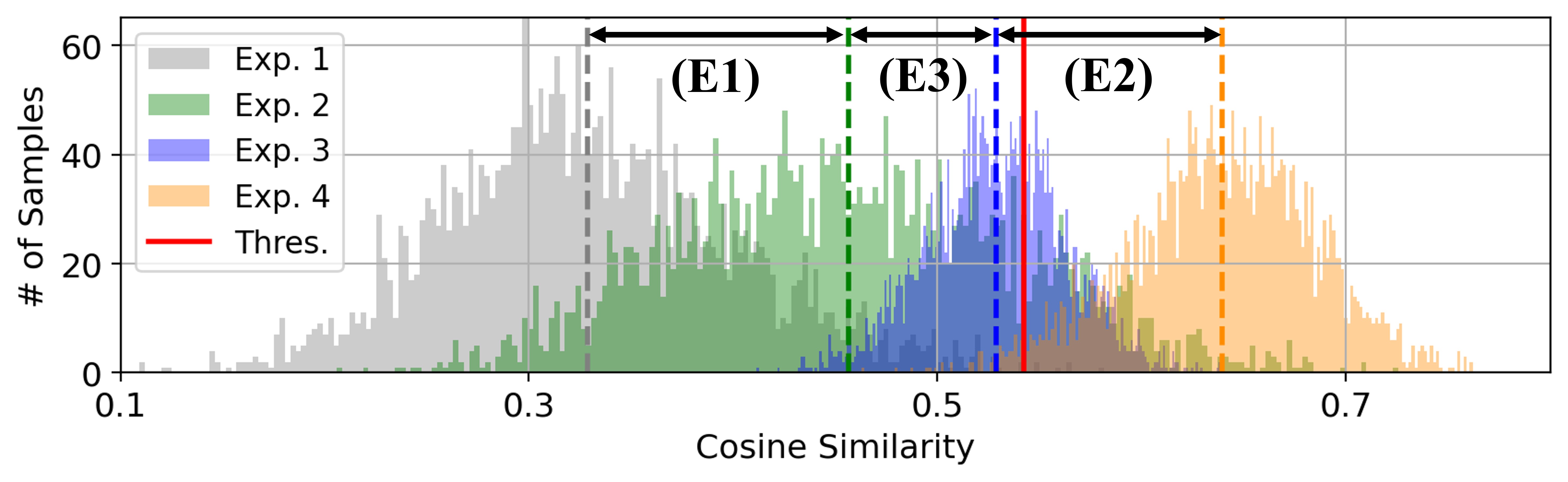}
    \vspace{-12pt}
    \caption{
    The errors (\textbf{E1})-(\textbf{E3}) in the attack by~\cite{kim2024scores} measured through experiments (Exp. 1--4).
    The dashed lines indicate the average of each distribution. The solid red line denotes the estimated threshold when $\mathrm{FMR}=10^{-6}$. Best viewed in color.
    }
    \label{fig:whyfails}
    \vspace{-10pt}
\end{figure}

The results are visualized in Fig.~\ref{fig:whyfails}. 
For the baseline case (Exp. 1), almost all samples fail to exceed the threshold line of $\mathrm{FMR}=10^{-6}$ (red solid line) due to errors, except for only a small portion of samples ($\approx 0.2\%$).
However, when we sequentially remove the effect of errors, the distribution of cosine similarity values gradually moves toward the right, and notably, for the rightmost case (Exp. 4), more than 97\% of samples exceed the threshold.
The averages of cosine similarity values for each experiment (Exp. 1--4) are 0.3289, 0.4567, 0.5289, and 0.6392, respectively.
This result indicates that all these errors (\textbf{E1})-(\textbf{E3}) in Eq.~\eqref{eq:errors} intricately yet significantly affect the failure of~\cite{kim2024scores} under high-security threshold settings.
In the following section, we present our techniques to reduce each error, thus addressing the limitation.

\section{Reducing Errors to Surpass High Thresholds}


To overcome the errors (\textbf{E1})-(\textbf{E3}) in Eq.~\eqref{eq:errors}, we revisit the techniques in~\cite{kim2024scores} and show that those errors can be further reduced without additional queries.

\subsection{(E1): Correction Matrix to Tame Metric Distortion}
Motivated by~\cite{kim2025non}, we utilize the \textit{correction matrix} to reduce the error (\textbf{E1}).
The correction matrix is defined as the inverse of the matrix $S=(s_{i,j})$, where $s_{i,j}$ denotes the cosine similarity between the $i^{\text{th}}$ and $j^{\text{th}}$ OFS elements obtained from score queries.
With the correction matrix, the adversary computes $\widehat{y} \gets A^{T}S^{-1}s$ instead of $A^{\dagger}s$.
While~\cite{kim2025non} justified this idea through a somewhat exotic assumption, i.e., OFS serves as a universal basis over faces, we propose a theoretically principled interpretation of the correction matrix in terms of metric distortion.

Recall that $\widehat{y} \gets A^{\dagger}s$ corresponds to solving the least squares over the surrogate embedding space, ignoring the metric distortion characterized by the additive error $\epsilon$.
Instead, if we apply a first-order approximation to such a distortion by introducing a positive definite matrix $\Sigma$, we can incorporate the correction matrix with the (approximated) projection over the target template space.
For templates $x,y$ in the surrogate model's embedding space, let us assume that their cosine similarity in the target model can be approximated by $x^{T} \Sigma y$.
Then we can write $S = A\Sigma A^{T}$, and $s = A\Sigma y$, and $\widehat{y} = A^{T}S^{-1}s = A^{T}(A \Sigma A^{T})^{-1} (A\Sigma )s$.

We can observe that $\widehat{y}$ becomes the projection of $y$ onto the row space of $A$ in the metric determined by $\Sigma$, namely\footnote{The proof is given in Section~B of the supplementary material.}, $\widehat{y} = \arg \min_{z \in R(A)} (y-z)^{T} \Sigma (y-z) \nonumber$, where $R(A)$ denotes the vector space spanned by the rows of $A$.
That is, $\widehat{y}$ provides a better estimate in the target embedding space in terms of the linear combination of OFSs.
While this interpretation relies on approximating the metric distortion via $\Sigma$, we empirically verify that the correction matrix substantially reduces the error (\textbf{E1}), especially when attacking the commercial FRS.

\subsection{(E2): OFS with an Optimal Projection Gap}
While the error (\textbf{E2}) stems from the subspace projection, the baseline OFS by \cite{kim2024scores} enforces orthogonality only.
Instead, we select the OFS obtained from the PCA, which provides the optimal subspace to minimize the projection gap.
More precisely, for a dataset $X \subset \mathbb{R}^{d}$, the PCA with $k$ principal components finds the row orthogonal matrix $W \in \mathbb{R}^{k \times d}$ which minimizes $\frac{1}{|X|}\sum_{y \in X}\|W^{T}Wy - y\|_{2}^{2}$.
Hence, with a sufficiently large dataset for $X$, e.g., large-scale public training datasets, the adversary can effectively reduce the error (\textbf{E2}) by setting $A=W$, i.e., using the faces corresponding to the principal components as the OFS.

\subsection{(E3): Improved Inverse Model}

To reduce the error (\textbf{E3}), we propose a new inverse model called the \textit{SPNet}.
%
Our design starts from the NbNet~\cite{mai2018reconstruction}, the first black-box inverse model.
Recently, \cite{paik2025reversibility} improved the NbNet by adopting the style mapping network of StyleGAN~\cite{karras2019style} with the adaptive instance normalization (AdaIN)~\cite{huang2017arbitrary}, regarding an input face template as a style.
They replaced each batch normalization~\cite{ioffe2015batch} with AdaIN.
In contrast, \cite{shahreza2024vulnerability} pointed out that the NbNet is prone to producing blurry output, proposing a new architecture, called DSCasConv, based on the skip connection.
We find that merging these techniques can achieve the best of both worlds, obtaining a more accurate inverse model.

In addition, we made the following micro-level modifications, including (i) removing \textsf{tanh} at the last layer to obtain a sharper facial image, (ii) replacing \textsf{ReLU} with \textsf{GELU}, and (iii) employing pixel-wise normalization (PNorm) \cite{karras2017progressive} for the deconvolution layer block.
Moreover, by following~\cite{paik2025reversibility}'s observation, we directly use the MS1MV3 training dataset~\cite{deng2020retinaface}.
Though our design choices are empirical, we empirically verify that they substantially improve the inversion precision, as well as the attack success rate.
We provide the detailed architecture configuration in Section~C of the supplementary material.

\subsubsection{Training Losses}
To train the proposed inverse model, we employ the following losses that are widely used for training the inverse model.
\begin{itemize}
    \item (Pixel Loss) it minimizes the absolute error between the original face $x$ and reconstructed ones $\widehat{x}$, namely, $\mathcal{L}_{\mathrm{Pixel}}(x,\widehat{x}) = \|x - \widehat{x}\|_{1}$.

    \item (Identity Loss) it maximizes the cosine similarity between templates of the original face $x$ and reconstructed ones $\widehat{x}$ extracted from various FRSs.
    More precisely, for FRSs $F_{1}, \dots, F_{k}$, we define $\mathcal{L}_{\mathrm{ID}}(x, \widehat{x}) = \sum_{i=1}^{k} (1- \langle F_{i}(x), F_{i}(\widehat{x}) \rangle)$, assuming that each embedding is normalized.
\end{itemize}
Since the adversary trains the inverse model of its own surrogate FRS, unlike black-box inverse models~\cite{mai2018reconstruction,duong2020vec2face}, we also include the surrogate model to compute $\mathcal{L}_{\mathrm{ID}}$.
The final loss is $\mathcal{L}_{\mathrm{Train}} = \lambda_{1}\mathcal{L}_{\mathrm{Pixel}} + \lambda_{2} \mathcal{L}_{\mathrm{ID}}$ for hyperparameters $\lambda_{1},\lambda_{2}$.

\subsection{Effect of the Proposed Techniques on the Errors}
To show the validity of the proposed techniques, we show how they affect the cosine similarity of original and recovered faces in the black-box and white-box settings.
In the black-box attack setting, we enable all the techniques (correction matrix, PCA-based OFS, and SPNet) to reduce all the errors (\textbf{E1}-\textbf{E3}).
In contrast, since the error (\textbf{E1}) is not engaged in the white-box attack setting, we only enable the PCA-based OFS and SPNet. 
All the remaining experimental settings, e.g., choice of the surrogate/target FRSs and the dataset to run the PCA, are equivalent to those of Fig.~\ref{fig:whyfails} of Section~\ref{sec:investigateerrors}.
The results are visualized in Fig.~\ref{fig:compare_wb} and~\ref{fig:compare_bb}, respectively.
We can observe that our techniques substantially improve the cosine similarities in both settings, showing their validity.
Notably, we can observe that these techniques finally enable the similarity values to surpass a high-security threshold; 20\% and 78.07\% of samples exceed the red line in black-box and white-box attacks, respectively.

\begin{figure}[!t]
    \centering
    \begin{subfigure}[t]{0.49\linewidth}
        \centering
        \includegraphics[width=\linewidth]{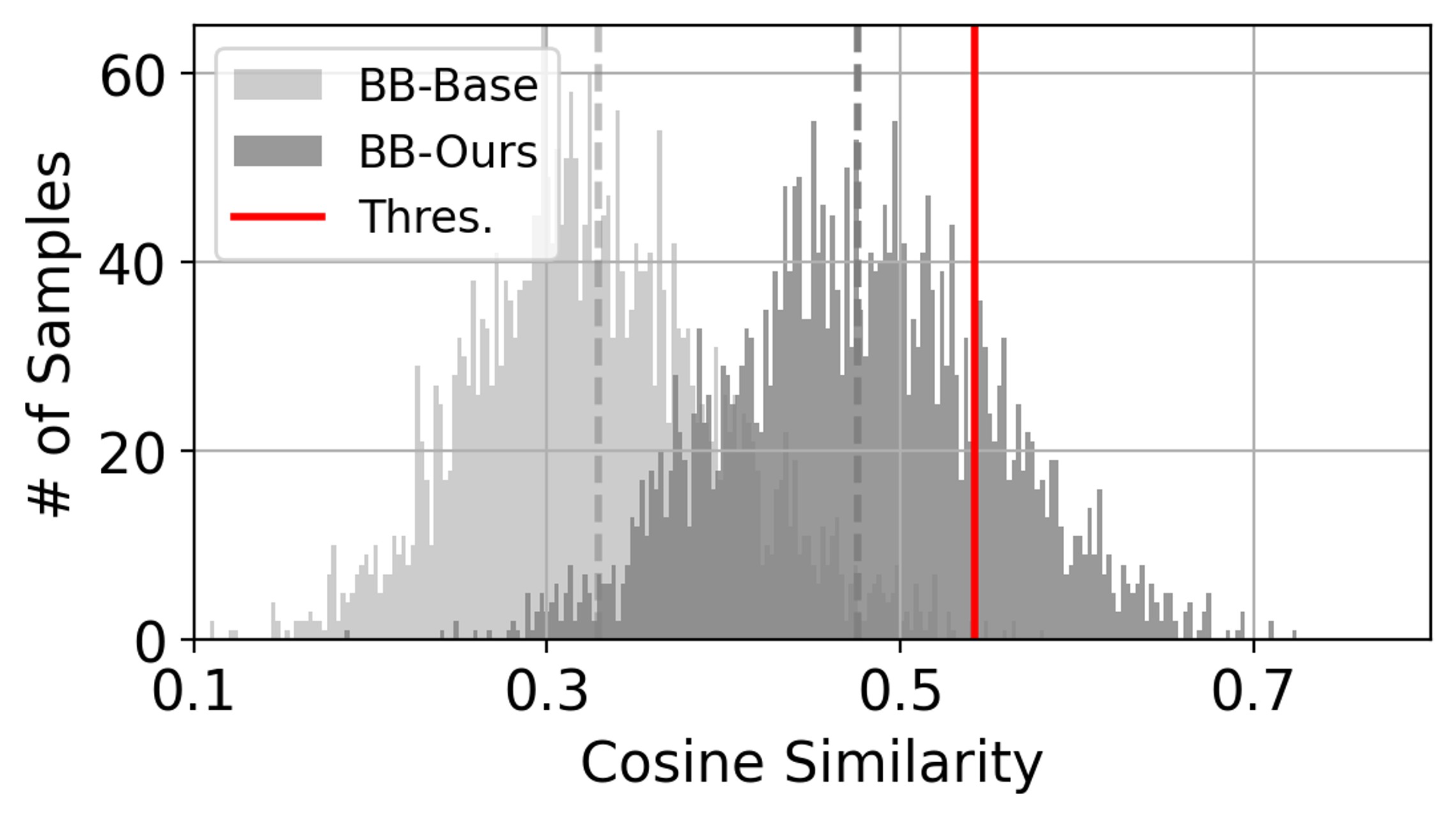}
        \caption{Black-Box Attacks}
        \label{fig:compare_wb}
    \end{subfigure}
    \hfill
    \begin{subfigure}[t]{0.49\linewidth}
        \centering
        \includegraphics[width=\linewidth]{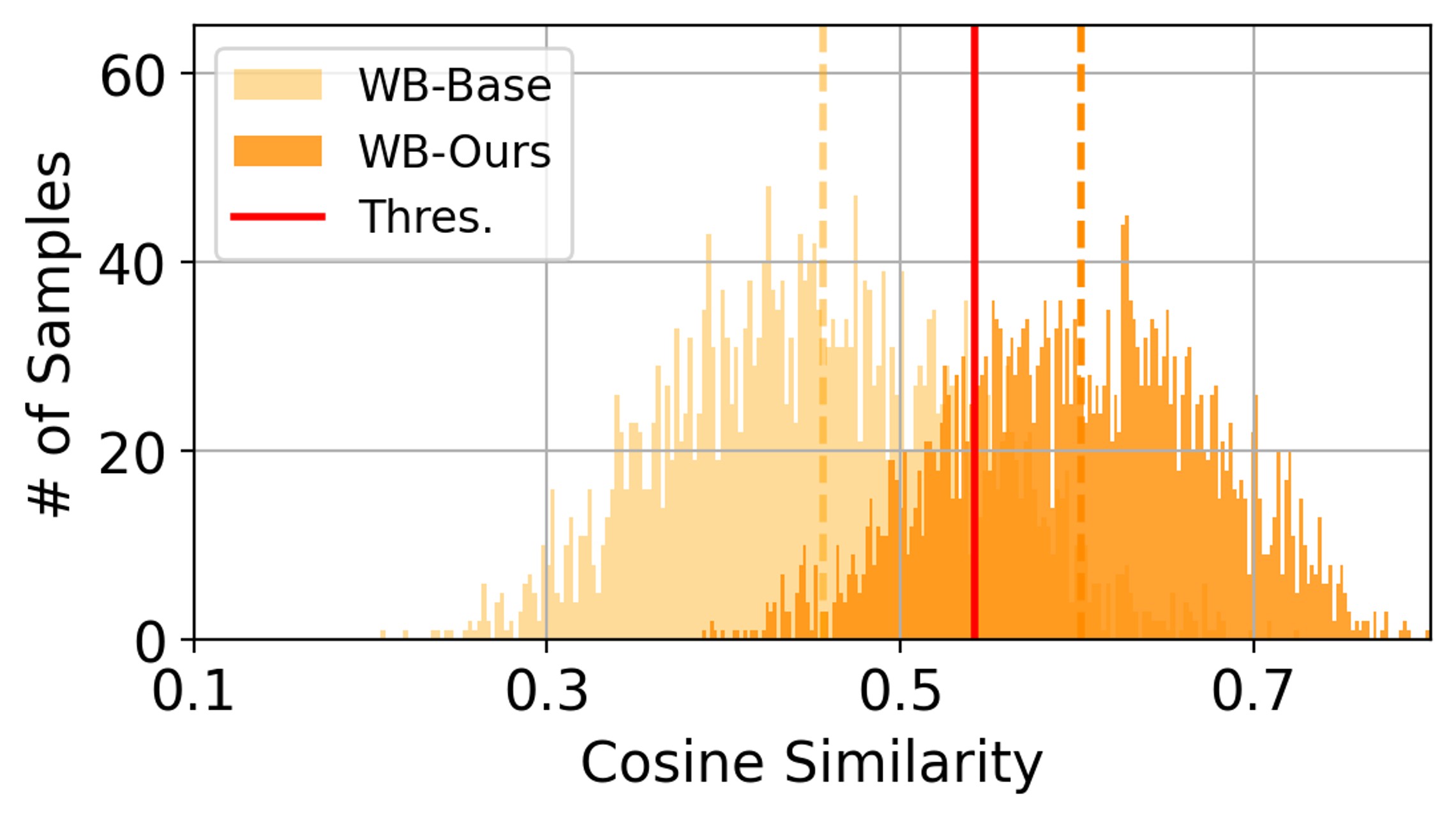}
        \caption{White-Box Attacks}
        \label{fig:compare_bb}
    \end{subfigure}
    \vspace{-5pt}
    \caption{
    The error analysis with our techniques---correction matrix (\textbf{E1}), PCA-based OFS \textbf{(E2)}, and SPNet \textbf{(E3)}---with a comparison to the baseline~\cite{kim2024scores} in the black-box (BB) and white-box (WB) attacks.
    In the BB attack of ours, all the techniques are enabled, whereas only PCA-based OFS and SPNet are enabled in the WB attack.
    }
    \label{fig:haha}
    \vspace{-12pt}
\end{figure}


\section{Experimental Analysis}


To verify the effectiveness of the proposed techniques, we conduct extensive experimental analyses on various open-source and commercial FRSs.

\subsection{Experimental Settings}

\subsubsection{Attack Experimental Setting}
We select the following open-source FRSs as the target systems: ViT-KPRPE~\cite{kim2024keypoint} $(F_{1})$, TopoFR~\cite{dan2024topofr} $(F_{2})$, SphereFace-R~\cite{liu2022sphereface} $(F_{3})$.
Their pre-trained parameters are publicly available in their public GitHub repositories, e.g., \texttt{CVLFace}~\cite{CVLface} and \texttt{OpenSphere}~\cite{opensphere}.
For the surrogate model $(F_{\mathsf{S}})$ held by the adversary, we use the ArcFace~\cite{deng2019arcface} from the \texttt{InsightFace} library~\cite{insightface}.
We select the AWS CompareFace API $(F_{\mathsf{A}})$ as the target commercial system.
We summarize the specification of each FRS in Tab.~\ref{tab:summary(FRSs)}.

We evaluate the attack success rate (ASR) by measuring how many reconstructed faces are authenticated as the target face.
The thresholds for open-source FRSs are set on the FMR of $10^{-6}$.
Since the AWS CompareFace returns the confidence score between 0 and 100, we select 80, 90, and 99 as the decision thresholds; note that 80 is the default threshold~\cite{aws_comparefaces_doc} and 99 is recommended for high-stakes use-cases~\cite{aws_public_safety_guidance}.
For evaluation, we select LFW~\cite{huang2008labeled}, CFP-FP~\cite{sengupta2016frontal}, and AgeDB~\cite{moschoglou2017agedb} datasets, which are widely used for benchmarking the accuracy of FRSs.
These datasets consist of pairs of facial images from the same or different identities; we enroll the first face of the pairs from the same identity and measure the ASR.
Details on these datasets are provided in Section~D of the supplementary material.
To craft the PCA-based OFS, we select 100 principal components of templates of the MS1MV3 dataset~\cite{kim2024scores} extracted from $F_{\mathsf{S}}$.
When attacking $F_{\mathsf{A}}$, we use the same method to convert confidence scores into cosine similarities as~\cite{kim2024scores}, whose details are provided in Section~A.2 of the supplementary material.

\begin{table}[t]
    \centering
    \caption{Summary of target open-source and commercial FRSs. Thres. denotes estimated threshold when $\mathrm{FMR}=10^{-6}$ for each open-source FRS.}
    \vspace{-10pt}
    {\scriptsize
    \setlength{\tabcolsep}{5pt}
    \begin{tabular}{lcccc}
    \toprule
       \textbf{FRS} & \textbf{Architecture} & \textbf{Loss} & \textbf{Train Dataset}  & \textbf{Thres.}  \\ \midrule
       $F_{\mathsf{S}}$~\cite{deng2019arcface}  & ResNet-100~\cite{he2016deep} & ArcFace & Glint360K~\cite{an2022killing} & 0.5422 \\  \midrule
       $F_{\mathsf{1}}$~\cite{kim2024keypoint}  & ViT-Base~\cite{dosovitskiy2020image} & AdaFace~\cite{kim2022adaface}+KPRPE & WebFace12M~\cite{zhu2021webface260m} & 0.5195 \\
       $F_{\mathsf{2}}$~\cite{dan2024topofr}  & ResNet-100  & CosFace~\cite{wang2018cosface}+TopoFR & MS1MV2~\cite{deng2019arcface} & 0.5216 \\ 
       $F_{\mathsf{3}}$~\cite{liu2022sphereface}  & ResNet-100 & SphereFace-R & MS1MV2 & 0.5537  \\  \midrule
       $F_{\mathsf{A}}$~\cite{AWSRekognition}  & \multicolumn{4}{c}{Not publicly disclosed} \\ \bottomrule        
    \end{tabular}
    }
    \label{tab:summary(FRSs)}
    \vspace{-12pt}
\end{table}

\subsubsection{Threshold Estimation for $\mathrm{FMR}={10^{-6}}$}
%
In LFW, CFP-FP, and AgeDB, low FMRs like $10^{-6}$ cannot be directly measured because of insufficient numbers of false pairs.
While the IJB-C benchmark~\cite{maze2018iarpa} provides low FMR settings, e.g., $10^{-6}$, the resulting threshold is not aligned with our evaluation datasets because, unlike them, the templates in the IJB-C protocol are the average of embeddings from multiple faces of the same identity.
Thus, we estimate the threshold for $\mathrm{FMR}=10^{-6}$ by using large-scale face datasets. 
Since $F_{\mathsf{S}}$, $F_{2}$, and $F_{3}$ are trained from MS1M-derived datasets (MS1MV2, Glint360K), to avoid potential overlap, we select the CASIA-WebFace dataset~\cite{yi2014learning} that is known not to be derived from the MS1M lineage.
Similar to the NIST-FRTE protocol~\cite{nist_frte_11}, we randomly sample 4,096 identities, sample one image per identity, and compute the false-pair cosine similarities from all cross-identities to find the candidate threshold for $\mathrm{FMR}=10^{-6}$.
Tab.~\ref{tab:summary(FRSs)} reports the average over 1,000 repetitions.
Though CASIA-WebFace may introduce distributional bias, the resulting ASRs are likely conservative: at directly measurable FMRs $10^{-2}$ and $10^{-3}$, CASIA-WebFace often yields comparable or tighter thresholds than the benchmark-specific ones.
This suggests that the resulting ASRs based on CASIA-WebFace thresholds are not optimistic estimates, and details are provided in Tab.~2 and~3 of Section~D in the supplementary material.


\subsubsection{Training Inverse Model}
We train SPNet with the MS1MV3 dataset~\cite{deng2020retinaface}.
We use the AdamW optimizer~\cite{loshchilov2019decoupled} with an initial learning rate of 0.001, weight decay of 0.0005, and a batch size of 64 for 81,000 total steps (approximately one epoch for the MS1MV3 dataset). 
A linear warm-up is applied during the first 5\% of training steps, followed by cosine annealing decay to 1\% of the initial learning rate. 
The loss coefficients are set to $\lambda_1=\lambda_2= 10$. 
For the identity loss, we employ an ArcFace ResNet-100 trained on MS1MV3 with ElasticFace-Cos+~\cite{boutros2022elasticface}.

\subsection{Attack Results}

\subsubsection{Open-source FRSs}
Fig.~\ref{fig:opensource} visualizes cosine similarity scores of true/false pairs (orange/green histograms) of the LFW dataset and those between the target and reconstructed faces from our attack.
Our attack achieves non-trivial ASRs in all target FRSs, achieving 76.83\% in the white-box setting ($F_{S}$) and 32.63\%--89.27\% in the black-box setting ($F_{1}$--$F_{3}$).
Since all training settings for $F_{1}$ differ from $F_{\mathsf{S}}$, it exhibits the lowest ASR, though it still achieves a non-negligible ASR.
For $F_{2}$ and $F_{3}$, our attack achieves much higher ASR, even exceeding the result in the white-box setting.
This stems from the extent of non-uniformity in the face template distributions, which is discussed in Section~E.1 of the supplementary material.
We provide the results for the CFP-FP and AgeDB datasets in Section~E.2 of the supplementary material.

\begin{figure}[t]
    \centering
    \includegraphics[width=\linewidth]{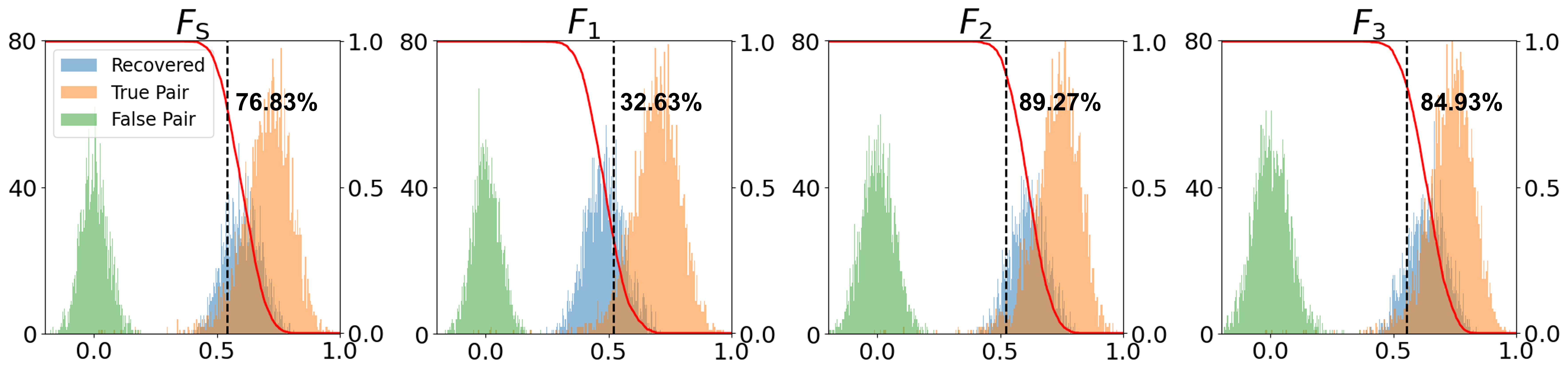}
    \vspace{-22pt}
    \caption{
    Attack results of ours on various open-source FRSs in the LFW dataset. x-axis: cosine similarity. y-axis: the number of samples (left), ASR (right).    
    Red solid lines denote the ASR for each threshold, and black dashed lines denote the threshold at $\mathrm{FMR}=10^{-6}$.
    We also report the ASR at that threshold.
    Best viewed in color.
    }
    \vspace{-14pt}
    \label{fig:opensource}
\end{figure}

\subsubsection{AWS CompareFace}
Tab.~\ref{tab:AWS_impersonation} shows that the baseline attack exhibits an ASR of at most $0.60\%$ in the tight threshold setting (99), though it achieves a non-trivial ASR at moderate thresholds.
When our techniques for (\textbf{E1}) and (\textbf{E2}) are enabled, the attack achieves an ASR of up to 63.50\%.
Such a gain is also observed when we replace NbNet with Arc2Face~\cite{papantoniou2024arc2face}, a diffusion-based inverse model that produces high-resolution ($512 \times 512$) faces.
After replacing NbNet with SPNet, the ASR exceeds 92.80\% even in a high-stakes scenario threshold.
For comparison with~\cite{kim2024scores}, we re-implement its attack and evaluate it under the same AWS CompareFace API version as ours. The reproduced ASR slightly differs from the originally reported ASR.
We guess that this discrepancy may stem from differences in the API/model version. We access the AWS CompareFace API with \texttt{FaceModelVersion: v7} (announced: 2023-12-05; accessed: 2026-03-05), whereas the baseline likely used v6 (announced: 2022-05-06), given the camera-ready deadline (2023-08-18) of the IEEE S\&P 2024 summer cycle.

\begin{table*}[t]
\centering
\setlength{\tabcolsep}{3pt}
\caption{ASRs (\%) on benchmark datasets against $F_{\mathsf{A}}$. Baseline corresponds to ``NbNet'' without our techniques. C: correction matrix. P: PCA-based OFS.}
\vspace{-10pt}
{\scriptsize
\label{tab:AWS_impersonation}
\begin{tabular}{l ccc ccc ccc}
\toprule
\multicolumn{1}{c}{Used Tools/Tech.} & \multicolumn{3}{c}{LFW} & \multicolumn{3}{c}{CFP-FP} & \multicolumn{3}{c}{AgeDB} \\
\cmidrule(lr){2-4} \cmidrule(lr){5-7} \cmidrule(lr){8-10}
& 80 & 90 & 99 & 80 & 90 & 99 & 80 & 90 & 99 \\
\midrule
Baseline$^{\ast}$~\cite{kim2024scores} & 27.0 & 13.9 & --   & 20.6 & 9.5 & -- & 29.6 & 17.7 & --   \\
Reproduced$^{\dagger}$ & 21.13 & 9.27 & 0.33   & 14.91 & 6.49 & 0.14 & 15.53 & 6.6 & 0.17   \\
\midrule 
NbNet+C & 70.89 & 48.98 & 2.83   & 64.14 & 41.43 & 2.46 & 61.17 & 40.27 & 2.10   \\
NbNet+P & 99.60 & 97.37 & 46.68   & 99.89 & 98.4 & 51.26 & 99.73 & 98.47 & 55.67   \\
\midrule 
NbNet+CP             & 99.67 & 98.20 & 56.10 & 99.63 & 98.63 & 60.60 & 99.93 & 99.23 & 63.50 \\
Arc2Face+CP & 99.43 & 97.83 & 53.92 & 99.26 & 97.71 & 53.03 & 99.13 & 96.53 & 44.83 \\
\midrule 
SPNet+CP & 99.90 & 99.83 & \textbf{92.80} & 99.87 & 99.84 & \textbf{95.24} & 100.00 & 100.00 & \textbf{96.87} \\
\bottomrule
\end{tabular}
}
\begin{tablenotes}
\scriptsize
\item[1] $^{\ast}$These results are reported in their original paper.
\item[2] $^\dagger$The difference in the results may stem from the underlying API/model version mismatch.
\end{tablenotes}
\vspace{-15pt}
\end{table*}


\subsubsection{Visualization of Attack Result}
Fig.~\ref{fig:aws_att_img} provides examples of the attack results; the 1st row and 2nd rows are the faces of the true pairs in the LFW dataset. 
The 3rd row shows the reconstruction of target faces in the 1st row.
We also provide the confidence score from $F_{\mathsf{A}}$.
Although the resulting faces are visually dissimilar to the targets, the confidence score is sufficiently high to surpass the high-stakes scenario threshold, even surpassing the score of true pairs in some cases.
This shows the discrepancy between the human's view and the embedding-level similarity, as our attack explicitly exploits the embedding space's geometry.

\begin{figure}[h]
  \centering
  \vspace{-15pt}
  \includegraphics[width=.93\linewidth]{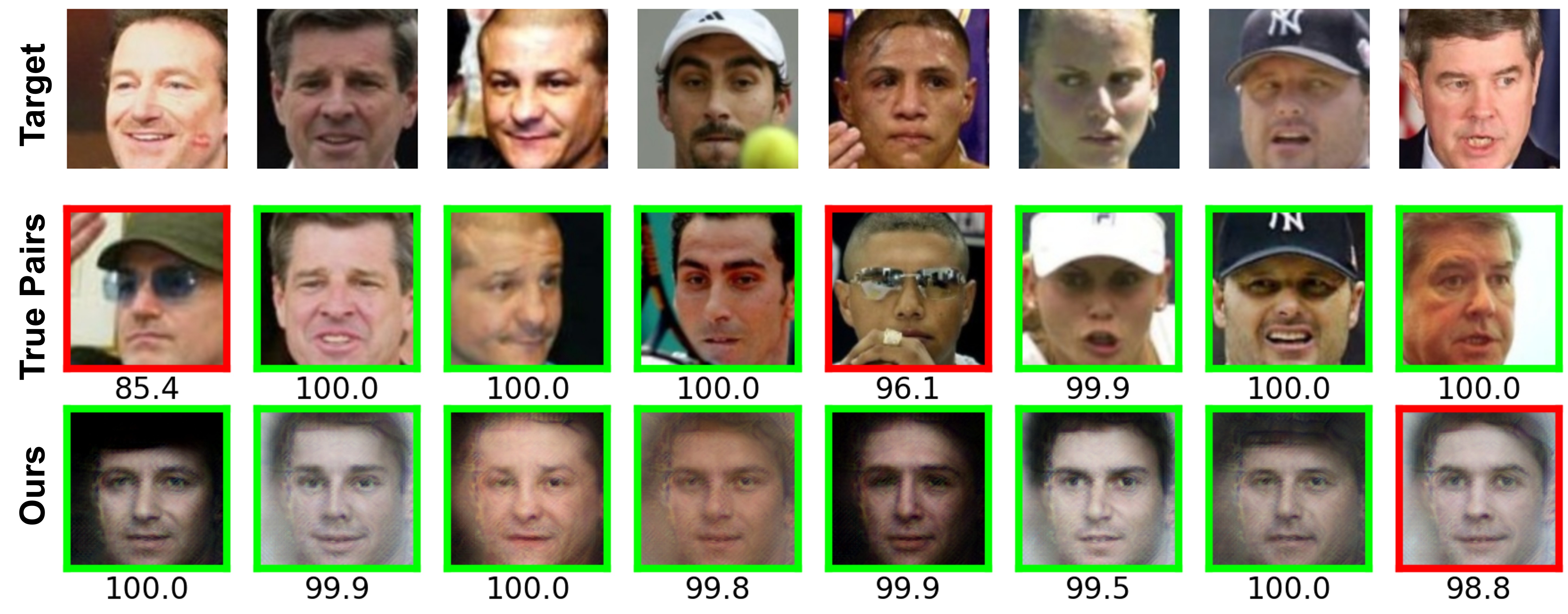}
  \vspace{-10pt}
  \caption{
  True pairs of the LFW dataset (1st and 2nd rows) and the reconstructed faces from our attack (3rd row) targeting those in the 1st row. Green/red boxes denote whether the confidence score with the 1st row's faces exceeds 99 or not, respectively.
  }
  \label{fig:aws_att_img}
  \vspace{-25pt}
\end{figure}



\subsubsection{Comparison with Additional Baselines}
We further examine whether three score-based attacks~\cite{razzhigaev2021darker,razzhigaev2025inverting,park2023towards} remain competitive in the 100-query regime. 
For~\cite{razzhigaev2021darker}, we use the query-wise trend reported in~\cite{kim2024scores}; for the other two attacks, we adapt their optimization procedures to the 100-query setting.
As these attacks rely on many iterative score queries, ranging from 4,000 to 300,000, their ASRs remain limited under this budget, reaching only about 1\% even at FMR$=10^{-4}$. In contrast, our attack achieves non-trivial ASRs under the same query budget and high-security thresholds. Detailed experimental settings and results are provided in Section~E.5 of the supplementary material.




\subsection{Ablation Study}

We conduct ablation studies on our techniques against $F_{\mathsf{A}}$; the results for $F_{1}$--$F_{3}$ are provided in Section~E.3 of the supplementary material.

\subsubsection{Query Budget Analysis}
Fig.~\ref{fig:asr_lfw_queries} shows ASR on LFW as the number of OFS queries increases from 10 to 100 under various decision thresholds. 
At thresholds 80 and 90, ASR increases rapidly and approaches 100\% with 60 queries. 
On the other hand, at the high-security setting (99), ASR remains low for small budgets, rising sharply after approximately 50 queries to achieve non-trivial ASR. 
Overall, 50 queries suffice to achieve a non-trivial ASR in all threshold regimes we tested.

\begin{figure}[!t]
    \centering
    \begin{subfigure}[t]{0.49\linewidth}
        \centering
        \includegraphics[width=\linewidth]{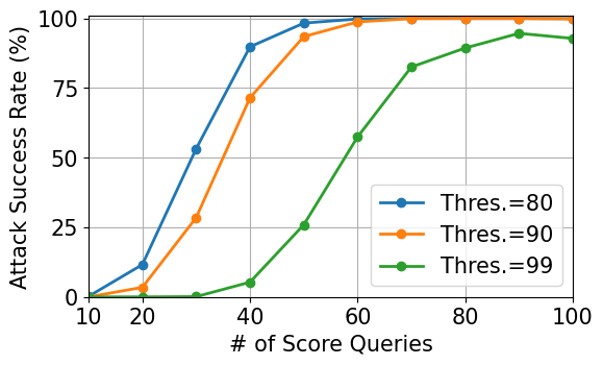}
        \caption{Query Budget Analysis}
        \label{fig:asr_lfw_queries}
    \end{subfigure}
    \hfill
    \begin{subfigure}[t]{0.49\linewidth}
        \centering
        \includegraphics[width=\linewidth]{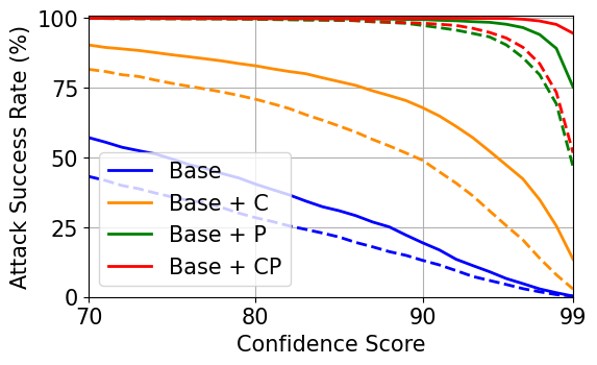}
        \caption{Effect of Techniques}
        \label{fig:th_sweep_type1}
    \end{subfigure}
    \vspace{-5pt}
    \caption{Ablation studies against $F_{\mathsf{A}}$ in the LFW dataset with respect to the effect of query budget and the effect of the correction matrix (C) and PCA-based OFS (P). 
    In (b), dashed lines and solid lines denote the ASR results from the baseline NbNet in~\cite{kim2024scores} and the proposed SPNet. Best viewed in color.}
    \label{fig:asr_lfw_threshold_sweep}
    \vspace{-10pt}
\end{figure}


\subsubsection{Effect of Proposed Techniques on ASR}
Fig.~\ref{fig:th_sweep_type1} depicts the effect of the proposed techniques to handle errors (\textbf{E1}-\textbf{E3}) on the ASR for $F_{\mathsf{A}}$.
We can observe that each technique (correction matrix, PCA-based OFS, and SPNet)  consistently and independently improves the ASR, even for both the baseline NbNet and the proposed SPNet.
This demonstrates how our techniques can overcome limitations of~\cite{kim2024scores} observed in Section~\ref{sec:investigateerrors} in a strict threshold setting.



\begin{wraptable}{R}{.55\linewidth}
\centering
\vspace{-10pt}
\caption{Ablation study on the inverse model}
{\scriptsize
\setlength{\tabcolsep}{4pt}
\begin{tabular}{lcc}
\toprule
Model  & Inv. Prec. & ASR on $F_{\mathsf{A}}$   \\ \midrule
NbNet~\cite{paik2025reversibility}     &  0.8809 & 56.10 \\ 
+ Tuning/Micro     &  0.9687 & 91.41 $\pm$ 1.75 \\ 
+ DSCasConv~\cite{shahreza2024vulnerability} & \textbf{0.9782} & \textbf{91.88 $\pm$ 1.60} \\ \bottomrule
\end{tabular} 
}
\label{tab:inv_ablation}
\vspace{-18pt}
\end{wraptable}


\subsubsection{Inverse Model}
We analyze the impact of our modifications to design SPNet. We train the baseline NbNet of~\cite{paik2025reversibility} for $F_{\mathsf{S}}$, and then add micro-level modifications and DSCasConv~\cite{shahreza2024vulnerability}, fixing the loss and the training recipe. Measuring the inversion precision on LFW and the ASR of our attack, Tab.~\ref{tab:inv_ablation} shows that our design improves both.
We further assess the robustness of both inversion precision and ASR to training randomness over five random seeds; the ASR on $F_{\mathsf{A}}$ exceeds $90\%$ across all five seeds, while the inversion precision remains nearly unchanged, varying by less than $10^{-4}$.

\section{Discussion}

\subsubsection{Potential Mitigation of the Attack}
To defend against the proposed attack, a straightforward solution is to hide the confidence score.
However, this may not be the fundamental solution because of insider adversaries who can access the raw API responses, e.g., service or API developers.
Instead, one may consider designing an FRS model having a stronger metric distortion---amplifying the error (\textbf{E1})---with respect to the open-source FRSs, e.g., FRSs that maintain high accuracy under tighter thresholds suggested by~\cite{kim2025non}.
To reduce the gain from the PCA, one may design a loss function that facilitates spreading the template distribution, such as some loss functions studied in the literature~\cite{duan2019uniformface}. 


We test our attack against ResNet-100 models with these defense methods applied.
With the fixed surrogate $F_\mathsf{S}$, our attack achieves ASRs of 41.33\% for~\cite{duan2019uniformface} at FMR$=10^{-6}$, which is roughly 47.94\% and 43.60\% drops compared to the same ResNet-100 target models $F_{2}$ and $F_{3}$.
In Section~E.1 of the supplementary material, we further show that~\cite{duan2019uniformface} yields a less concentrated PCA singular-value spectrum, thus increasing the error (\textbf{E2}).
In contrast, the ASR for \cite{kim2025non} is 0.13\% at FMR$=10^{-6}$, substantially weakening the attack efficacy.
We believe that these empirical results may help better understand the inherent robustness of FRS, leaving their further investigation as an interesting future direction.






\subsubsection{Image Quality of the Reconstructed Faces}
The proposed inverse model crafts relatively low-resolution faces compared to recent inversion models; in fact, our inverse model produces $128 \times 128$ resolution images, while recent models can handle $512 \times 512$ or $1024 \times 1024$ resolution~\cite{otroshi2023face,shahreza2025face,papantoniou2024arc2face}.
Nevertheless, all the reconstructed faces pass the face detection algorithm of the AWS CompareFace API without any additional treatments, and as we already observed in Tab.~\ref{tab:AWS_impersonation}, even reducing (\textbf{E1}) and (\textbf{E2}) alone already suffices to achieve a non-trivial ASR.
When we combine the proposed attack with Arc2Face~\cite{papantoniou2024arc2face}, which produces faces of $512 \times 512$ resolution, our attack still maintains 53.92\% of ASR when the threshold is set to 99, i.e., a high-security setting\footnote{We provide the example recovered faces in Section~E.4 of the supplementary material.}.
Improving high-resolution inverse models would further enhance the ASR.

\subsubsection{On the Feasibility of Real-World Physical Attacks}
While we showed the validity of the proposed attack against the commercial API, our evaluation considers a digital setting by directly submitting faces to the API.
However, in the real-world physical attack, the reconstructed faces may deform during presentation, e.g., printing or occlusion, and sensor-level defenses such as presentation attack detection~\cite{wang2020cross,wang2022patchnet,fang2023surveillance} can be employed.
Nevertheless, we emphasize that our attack exposes the vulnerability in the feature extraction stage, which is orthogonal to the sensor-level countermeasures mentioned above.
This suggests that the attack could potentially be realized in physical settings when combined with off-the-shelf spoofing attacks, leaving it as an interesting future direction.

\subsubsection{The Effect of Surrogate (Mis-)Alignment}
%
Surrogate-target misalignment (\textbf{E1}) plays a significant role in attack efficiency, as we observed in our analyses against $F_{1}$--$F_{3}$ and the defense method by~\cite{kim2025non}.
Since our primary goal is to analyze a commercial FRS $F_{\mathsf{A}}$, we do not exhaustively explore this aspect.
Nevertheless, its systematic characterization across diverse surrogate-target pairs in terms of architecture, training datasets, and losses would help us better understand the FRS vulnerability, and we leave it as important future work.




\section{Conclusion}


Despite the increasing adoption of FRSs in high-stakes scenarios, the integrity of FRSs under such scenarios has not yet been thoroughly explored.
In this paper, we demonstrate the vulnerability of various open-source and commercial FRSs even in low-FMR (e.g., $10^{-6}$) settings under black-box score-based attack scenarios with a practical query budget.
Our attack is enabled by a geometric interpretation of score queries over the embedding space, together with a novel inverse model that achieves high inversion precision. 
Beyond the attack itself, the proposed inverse model may be utilized for other applications, such as identity-preserving synthetic face generation~\cite{papantoniou2024arc2face,wu2025vec2face}.
We hope that our findings contribute to improving the security and robustness of future FRS designs.

\subsubsection{Ethical Statement}
Our goal is to understand the potential vulnerability of the FRSs against score-based attacks in strict security settings, not to facilitate misuse.
All the experiments were done in a strictly controlled setting, and \textbf{we did not launch attacks against real-world FRS deployments}.



\section*{Acknowledgements}
This work was supported in part by the Institute of Information and Communication Technology Planning and Evaluation (IITP) grant funded by the Korea Government (MSIT) (RS-2021-II210727), and in part by the Culture, Sports and Tourism R\&D Program through the Korea Creative Content Agency (KOCCA) grant funded by the Ministry of Culture, Sports and Tourism (RS-2024-00332210).

\clearpage  
%
%
\bibliographystyle{splncs04}
\bibliography{main}

\clearpage
\appendix
\title{Supplementary Material of \\ Breaking High Confidence: Practical Face Impersonation under High-Security Thresholds}

\titlerunning{Practical Face Impersonation under High-Security Thresholds}

\author{\ }
\institute{\ }
\authorrunning{C.~Kim et al.}


\maketitle












%
%

\appendix

\vspace{-1.0cm}


In this supplementary material, we provide the following content that was omitted in the main text due to space constraints.
\begin{itemize}
    \item The complete attack pipeline and omitted algorithms. (Section~\ref{app:SecA})

    \item Omitted mathematical proofs. (Section~\ref{app:SecB})

    \item Detailed configuration of the SPNet. (Section~\ref{app:SecC})    

    \item Specification of the train/evaluation datasets and FR models. (Section~\ref{app:SecD})

    \item Additional experimental results. (Section~\ref{app:SecE})

\end{itemize}

\section{Complete Attack Pipeline and Omitted Algorithms}\label{app:SecA}

In this section, we present our complete attack pipeline against the AWS CompareFace and provide detailed algorithms and analyses---converting confidence scores to cosine similarities and query cost for each attack---that were omitted in the main text.
Fig.~\ref{fig:spattack} presents a conceptual overview of the score-based impersonation attack pipeline used in our experiments. 

\begin{figure}[t]
    \centering
    \includegraphics[width=\linewidth]{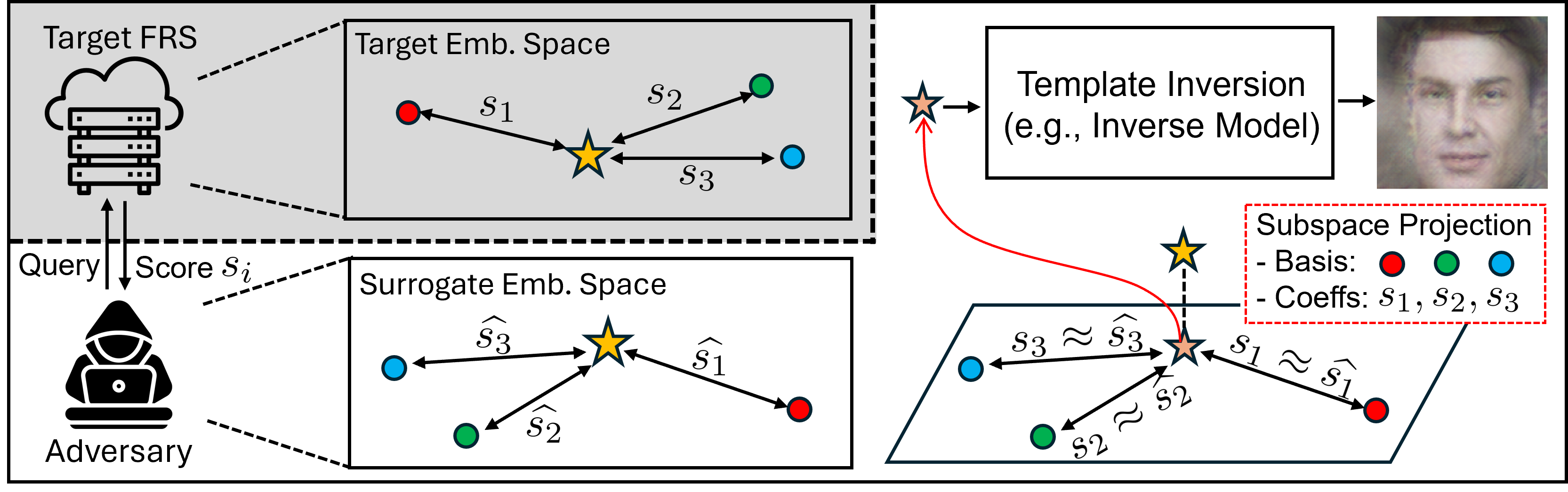}
    \vspace{-20pt}
    \caption{The score-based attack pipeline we used. 
    The gray area is not accessible to the adversary.
    A star and colored dots denote templates of the enrolled and queried faces in the target and surrogate embedding spaces, respectively. Best viewed in color.}
    \label{fig:spattack}
    \vspace{-10pt}
\end{figure}

\subsubsection{Commercial FRS with Score Queries}
We begin by providing a formal notation on the commercial FRS that permits score queries.
Let $\mathsf{FRS}$ denote a black-box FRS consisting of a feature extractor and an enrolled database. Given two face images, $\mathsf{FRS}$ computes the cosine similarity between their feature representations and returns a confidence score through an internal score-mapping function. 
For notational convenience, we write $\mathsf{FRS}(\mathsf{img}_1,~\mathsf{img}_2)$ for the confidence score returned for an input pair $(\mathsf{img}_1,~\mathsf{img}_2)\in\mathcal{I}\times\mathcal{I}$, where $\mathcal{I}$ denotes the set of face images. Let $F_T:\mathcal{I}\rightarrow\mathbb{S}^{d-1}$ denote the feature extractor used by $\mathsf{FRS}$, where $d$ is the feature dimension. Then,
\begin{align*}
    \mathsf{FRS}(\mathsf{img}_1,~\mathsf{img}_2)
    = g\!\left(\left\langle F_T(\mathsf{img}_1),~\,F_T(\mathsf{img}_2)\right\rangle\right)
    = c \in [0,~1],
\end{align*}
where $g:[-1,~1]\rightarrow[0,~1]$ maps cosine similarity to a confidence score. We assume that the adversary can query confidence scores not only against the enrolled target image but also for other image pairs. 
In particular, for an enrolled target image $\mathsf{img}_t$, we define
\begin{align*}
    \mathsf{FRS}_t(\mathsf{img}_q)
    = \mathsf{FRS}(\mathsf{img}_q,~\mathsf{img}_t)
    = c_q.
\end{align*}
This means the direct score query to the target face enrolled in $\mathsf{FRS}$.



\subsection{Full Attack Pipeline}
The proposed attack consists of two phases: the \textit{offline} preprocessing phase and the \textit{online} impersonation phase. 
The former can be done in advance without relying on the target face.
In the following paragraphs, we provide a detailed process for each phase.
The full attack pipeline is provided in Fig.~\ref{fig:pipeline}.

\subsubsection{Preprocessing Phase}
We first introduce the preprocessing phase.
The goal of this phase is to construct the PCA-based OFS and the correction matrix through pairwise score queries.
As these objects do not rely on the target face's identity, we regard this phase as a preprocessing.
\begin{itemize}
    \item [1.] First, prepare a surrogate feature extractor $F_L$ for the target FRS (e.g., by training one via metric learning or adapting an open-source model), and then train the corresponding inverse model $F_L^{-1}:\mathbb{S}^{d-1}\rightarrow\mathcal{I}$. The training procedure is described in the main paper.

    \item [2.] For a dataset $DB_L$, let $\mathcal{X}=F_L(DB_L)$ be the extracted feature set. After applying PCA to $\mathcal{X}$, let $\mathcal{O}\in\mathbb{R}^{k\times d}$ denote the matrix whose rows are the top-$k$ principal component vectors, where the $i$-th row $o_i\in\mathbb{R}^{d}$ represents the $i$-th principal direction in the feature space. The OFS is then generated by inverting these components, i.e.,
    $(\mathsf{O}_1,\ldots,\mathsf{O}_k)=\bigl(F_L^{-1}(o_1),\ldots,F_L^{-1}(o_k)\bigr)$. Let $\mathcal{O}_f\in\mathbb{R}^{k \times d}$ denote the OFS feature matrix, whose $i$-th row is $F_L(\mathsf{O}_i)$.

    \item [3.] Query $\mathsf{FRS}$ on all OFS pairs and let $R^c\in[0,~1]^{k\times k}$ denote the resulting confidence score matrix, whose $(i,j)$-th entry is $r^c_{ij}=\mathsf{FRS}(\mathsf{O}_i,~\mathsf{O}_j)$. The correction matrix $R=g^{-1}(R^c)\in[-1,~1]^{k\times k}$ is then computed, whose $(i,j)$-th entry is $g^{-1}(r^c_{ij})$. Here, $g^{-1}$ denotes an estimated inverse mapping from confidence scores to cosine similarities, which is described in Section~\ref{subsec:estimate_cos}.
\end{itemize}

\subsubsection{Impersonation Phase}
After obtaining the correction matrix and the OFS, the adversary now recovers the face by leveraging the direct score queries.
\begin{itemize}
    \item [1.] For each $\mathsf{O}_{i}$ ($i=1,\dots, k$), the adversary first makes a direct score query to obtain $c_{i} \gets \mathsf{FRS}_{t}$. Then it computes the estimated cosine similarity vector via obtained confidence scores, namely, $\vec{s} \gets (g^{-1}(c_{1}), \dots, g^{-1}(c_{k}))$.

    \item [2.] With the correction matrix $R$ and the OFS feature matrix $\mathcal{O}_{f}$ computed in the preprocessing phase, the adversary computes the candidate template $z \gets  \vec{s} \cdot R^{-1} \cdot \mathcal{O}_{f}$ in the local embedding space. 
    Finally, the adversary crafts the attack face image $\widehat{\mathsf{img}} \gets F_{L}^{-1}(z / \|z\|_{2})$ through the inverse model $F_{L}^{-1}$.
\end{itemize}




\begin{figure}[t]
\fbox{
\parbox{.95\linewidth}{

\begin{center}
    \textbf{Score-based Impersonation Attack against AWS CompareFace}
\end{center}
\vspace{-5pt}

\textbf{Inputs}:
\vspace{-5pt}
\begin{itemize}
    \item $\mathsf{FRS}$: black-box target face recognition system.
    \item $F_L:\mathcal{I}\rightarrow \mathbb{S}^{d-1}$: surrogate feature extractor.
    \item $DB_L$: reference database available to the attacker.
    \item $g^{-1}:[0,1]\rightarrow[-1,1]$: score inversion function.
    \item $k\in\mathbb{N}$: number of queries to the target system.
\end{itemize}
\vspace{-5pt}
\textbf{Output}: image $\widehat{\mathsf{img}}$.
\vspace{-5pt}

\rule{\linewidth}{0.1mm}

\textbf{Step 1. Preprocessing}
\vspace{-5pt}
\begin{enumerate}
    
    \item Train the inverse model $F_L^{-1}$ corresponding to $F_L$.
    \item Generate the orthogonal face set (OFS).
    \begin{enumerate}
        \item Extract feature vectors $\mathcal{X}\leftarrow F_L(DB_L)$.
        \item Compute the top-$k$ principal directions $(o_1,\ldots,o_k)\leftarrow \mathrm{PCA}(\mathcal{X},~k)$.
        \item Generate the OFS $(\mathsf{O}_1,\ldots,\mathsf{O}_k)\leftarrow      \bigl(F_L^{-1}(o_1),\ldots,F_L^{-1}(o_k)\bigr)$.
        \item Extract OFS features $\mathcal{O}_f \leftarrow
        \bigl(F_L(\mathsf{O}_1),\ldots,F_L(\mathsf{O}_k)\bigr)$.
    \end{enumerate}
    \item Compute the correction matrix $R$.
    \begin{enumerate}
        \item Query all OFS pairs:
$R^c=(r^c_{ij}),\ r^c_{ij}\leftarrow \mathsf{FRS}(\mathsf{O}_i,\mathsf{O}_j)$.
        \item Compute the correction matrix: $R \leftarrow g^{-1}(R^c)\in[-1,1]^{k\times k}$.
        \vspace{-10pt}
    \end{enumerate}
\end{enumerate}

\rule{\linewidth}{0.1mm}

\textbf{Step 2. Impersonation Attack against AWS}
\vspace{-5pt}
\begin{enumerate}
    \item Set $c_i \leftarrow \mathsf{FRS}_t(\mathsf{O}_i)$ via score queries with each OFS face for $i=1,\ldots,k.$.
    
    \item Compute the estimated cosine similarities: $\vec{s}\leftarrow (g^{-1}(c_i))_{i=1}^{k}$.

    \item Compute the candidate template: $z \leftarrow \vec{s}\cdot R^{-1}\cdot \mathcal{O}_f$.

    \item Generate the final attack image: $\widehat{\mathsf{img}} \leftarrow F_L^{-1}(z/\|z\|_{2})$.

    \item \textbf{Return} $\widehat{\mathsf{img}}$.    
    \vspace{-5pt}
\end{enumerate}
}
}
\vspace{-5pt}
\caption{Full pipeline of the proposed attack against AWS CompareFace.}
\label{fig:pipeline}

\vspace{-10pt}
\end{figure}

\subsection{Estimating Cosine Similarity from Confidence Scores}\label{subsec:estimate_cos}
%

In practice, we approximate $g^{-1}$ using the confidence score to the cosine similarity inversion proposed in~\cite{kim2024scores}. Rather than refitting the inversion parameters, we directly adopt the values provided in their publicly released implementation; these values were obtained on the CFP-FP~\cite{sengupta2016frontal} using the DOGBOX algorithm. 
This is because the surrogate model in~\cite{kim2024scores} and ours are nearly identical, including the architecture type, training dataset, training loss, and the training recipe, such as hyperparameters, except for the number of layers.
We empirically verify that keeping the same coefficients remains successful in the attack; we expect that improving this component would further increase the attack success rate.

In the case of AWS, the system returns a confidence score in $[0,100]$. We normalize this score to $[0,1]$, and let $c\in[0,1]$ denote the normalized confidence score. We then estimate cosine similarity via
\begin{align*}
    g^{-1}(c)
    = 1 - \left(
    -\frac{1}{k}\log\left(\frac{L}{c-b}-1\right)+d_0
    \right).
\end{align*}
We use the following parameters: $(L,d_0,k,b)=(0.99,\,0.75,\,-25,\,0.00004)$.

\subsection{Query Cost}
Throughout the attack, the adversary makes score queries when (i) constructing the correction matrix during preprocessing, and (ii) obtaining confidence scores between the target face and faces in the OFS.
When we select the size of the OFS as $k$, the former and latter take $\frac{k(k-1)}{2}$ and $k$ queries, respectively.
Since constructing the correction matrix can be performed \textit{offline}, i.e., it can be done independently of the target face, in the \textit{online} phase, $k$ queries suffice for conducting the impersonation attack.

In particular, the adversary can reduce the AWS API usage fee by using the method \texttt{search\_faces\_by\_image}, which compares a query image against all faces in an enrolled collection, rather than the one-to-one \texttt{compare\_faces} API. 
Note that \texttt{search\_faces\_by\_image} returns top 4,096 confidence scores in a single call, and each call requires \$0.001 of fee, i.e., the same fee as \texttt{compare\_faces}\footnote{\url{https://aws.amazon.com/rekognition/pricing/}}.
Using this, we construct the correction matrix by making $k=100$ calls to \texttt{search\_faces\_by\_image} and obtain 100 confidence scores from 100 calls to \texttt{compare\_faces}.
Ignoring the cost of enrolling the faces to the AWS cloud server, \$0.2 suffices to launch a single attack; furthermore, since constructing the correction matrix is one-time, subsequent attacks require \$0.1 API usage fee only.

\section{Omitted Proof}\label{app:SecB}
We present a detailed analysis of the projection error (\textbf{E2}), which was omitted in the main text.
We prove the following statement, which justifies our interpretation of the correction matrix from the perspective of metric distortion. 

\begin{proposition}\label{prop1}
Let $A \in \mathbb{R}^{k \times d}$ be a matrix and $x \in \mathbb{R}^{d}$ be a vector.
For a positive definite matrix $\Sigma \in \mathbb{R}^{d \times d}$, if we denote $s := A\Sigma x$ and $S = A\Sigma A^{T} \in \mathbb{R}^{k \times k}$, then $\widehat{x}:= A^{T}S^{-1}s$ satisfies the following:
\begin{align}\label{eq:prop1_1}
    \widehat{x} = \arg\min _{z \in R(A)} (x-z)^{T} \Sigma (x-z)
\end{align}
where $R(A)$ denotes the row space of $A$, i.e., the vector space that is spanned by row vectors of $A$.
\end{proposition}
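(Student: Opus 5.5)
Since $R(A)$ is the column space of $A^{T}$, every $z\in R(A)$ can be written as $z=A^{T}w$ for some $w\in\mathbb{R}^{k}$. The problem in \eqref{eq:prop1_1} is then a weighted least-squares problem in the parameter $w$, and the plan is to solve it directly. The objective is
\begin{align*}
f(w) = (x-A^{T}w)^{T}\Sigma(x-A^{T}w),
\end{align*}
which is a convex quadratic in $w$ because $\Sigma$ is positive definite. Its gradient is $-2A\Sigma(x-A^{T}w)$. Setting the gradient to zero gives the normal equations $A\Sigma A^{T}w = A\Sigma x$, that is, $Sw=s$. Since $S$ is assumed invertible (implicitly, as $S^{-1}$ appears in the statement), this gives $w^{\ast}=S^{-1}s$, and therefore $z^{\ast}=A^{T}S^{-1}s=\widehat{x}$.

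To make the argument self-contained and avoid appealing to calculus, I would instead verify the orthogonality characterization directly. The vector $\Sigma$ defines an inner product $\langle u,v\rangle_{\Sigma}=u^{T}\Sigma v$. First, $\widehat{x}\in R(A)$ holds trivially. Second, for every row $a_i$ of $A$, we have $a_i^{T}\Sigma(x-\widehat{x})=0$, because
\begin{align*}
A\Sigma(x-\widehat{x}) = s - A\Sigma A^{T}S^{-1}s = s - SS^{-1}s = 0.
\end{align*}
Hence the residual $x-\widehat{x}$ is $\Sigma$-orthogonal to $R(A)$.

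Now take any $z\in R(A)$. Since $\widehat{x}-z\in R(A)$, it is $\Sigma$-orthogonal to the residual, and the Pythagorean identity in this inner product gives
\begin{align*}
(x-z)^{T}\Sigma(x-z)=(x-\widehat{x})^{T}\Sigma(x-\widehat{x})+(\widehat{x}-z)^{T}\Sigma(\widehat{x}-z).
\end{align*}
The cross term vanishes by the orthogonality just shown. The second term is nonnegative, and by positive definiteness of $\Sigma$ it is zero only when $z=\widehat{x}$. So $\widehat{x}$ is the unique minimizer, which establishes \eqref{eq:prop1_1}.

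The only delicate point is the invertibility of $S$. $S$ is positive definite exactly when $A$ has full row rank: for $w\neq 0$ we have $A^{T}w\neq 0$, so $w^{T}Sw=(A^{T}w)^{T}\Sigma(A^{T}w)>0$. I would state this as the standing assumption, which the statement already uses implicitly by writing $S^{-1}$. In the paper's setting it holds because the OFS templates are nearly orthogonal and $k\le d$. If $A$ were rank-deficient, the same argument would go through with $S^{-1}$ replaced by the pseudoinverse $S^{\dagger}$; this works because $s=A\Sigma x$ lies in the range of $S$. I do not expect any other obstacle.
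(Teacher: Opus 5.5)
Your proof is correct. Your first paragraph is essentially the paper's argument: the paper substitutes $z = A^{T}t$, expands the objective as $x^{T}\Sigma x + t^{T}(A\Sigma A^{T})t - 2t^{T}A\Sigma x$, and reads off the normal equations $St^{\ast} = s$ as the solution of an unconstrained quadratic program.

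The route you actually commit to is different. Instead of deriving the minimizer from first-order conditions, you take the candidate $\widehat{x}$ and certify it: you show $A\Sigma(x-\widehat{x}) = 0$, then use the Pythagorean identity in the $\Sigma$-inner product.

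Your route gives global optimality and uniqueness in $z$ explicitly. The paper instead relies tacitly on the fact that a stationary point of a convex quadratic is its global minimizer, and it writes $\arg\min$ as a single point without comment. Your route also isolates the one hypothesis the paper leaves implicit: $S$ is invertible if and only if $A$ has full row rank. Your extension to rank-deficient $A$ via $S^{\dagger}$ is also valid, since $s = A\Sigma x$ lies in the range of $A$, which equals the range of $S$ because $\Sigma$ is positive definite.

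The paper's derivation, in turn, shows where the formula $A^{T}S^{-1}s$ comes from rather than verifying it after the fact. Both arguments use the symmetry of $\Sigma$, which is implicit in ``positive definite''; yours needs it to discard the cross term in the Pythagorean identity.
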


\begin{proof}
Since $z \in R(A)$, by setting $z = A^{T}t$ for $t \in \mathbb{R}^{k}$, we can rewrite Eq.~\eqref{eq:prop1_1} as
\begin{align}
    t^{\ast} = \arg\min _{t \in \mathbb{R}^{k}} (x-A^{T}t)^{T} \Sigma (x-A^{T}t) \; \land \; \widehat{x} = At^{\ast} \nonumber.
\end{align}
Due to the symmetry of $\Sigma$, $z^{T} \Sigma x = x^{T} \Sigma z$, and we can observe that 
\begin{align}
    (x-z)^{T} \Sigma (x-z) &= x^{T} \Sigma x + z^{T}\Sigma z - 2z^{T} \Sigma x \nonumber \\
    &= x^{T}\Sigma x + t^{T}(A\Sigma A^{T})t - 2t^{T}(A\Sigma)x \nonumber.
\end{align}
Minimizing this quantity is a well-known (unconstrained) quadratic program, whose solution $t^{\ast}$ satisfies $(A \Sigma A^{T})t^{\ast} = A\Sigma x$. 
Thus, since $S=A \Sigma A^{T}$ and $s = A\Sigma x$, we finally obtain $t^{\ast} = S^{-1}A\Sigma x = S^{-1}s$ and $\widehat{x} = A^{T}t^{\ast} = A^{T}S^{-1}s$. \qed
\end{proof}

Notably, in the white-box attack scenario, $S=AA^{T}$, i.e., $\Sigma= I$.
In this case, we can observe that $A^{T}S^{-1}s = A^{T}(AA^{T})^{-1}s = A^{\dagger} s$, i.e., recovering the previous attack without the correction matrix.


\section{Detailed Configuration of the SPNet}\label{app:SecC}
As illustrated in Fig.~\ref{fig:spnet_overview}, SPNet\footnote{SPNet stands for Self-Perceptual Network, as it uses not only an external pretrained network but also the network itself to compute the perceptual loss.} consists of a Style Mapping Network and five Block Layers. The input feature is used both to produce a style feature through the Style Mapping Network and to initialize the reconstruction process for the Block Layers. Each Block Layer follows the DSCasConv~\cite{shahreza2024vulnerability} structure while incorporating AdaIN-based style modulation~\cite{huang2017arbitrary}.
%
\subsubsection{Style Mapping Network}
The Style Mapping Network maps the input feature $[B,~512]$ to a style feature of the same shape, where $B$ denotes the batch size and 512 the feature dimension. It consists of 8 FC blocks, each composed of $\mathsf{EqualLinear}$, $\mathsf{PixelNorm}$ (pixel-wise feature vector normalization), and $\mathsf{ReLU}$ (rectified linear unit). $\mathsf{EqualLinear}$ follows the equalized learning rate of Karras et al.~\cite{karras2017progressive}, and maps $[B,~512]$ to $[B,~512]$ in this network. The resulting style feature is applied to all $\mathsf{AdaIN}$ modules in the Block Layers.
\subsubsection{Block Layers}
Before entering the first Block Layer, the input feature is projected by an $\mathsf{FC}$ (fully connected) layer and $\mathsf{GELU}$ (gaussian error linear unit~\cite{hendrycks2016gaussian}), expanding $[B,~512]$ to $[B,~512\times4\times4]$ and reshaping it to $[B,~512,~4,~4]$. As highlighted by the red dashed box in Fig.~\ref{fig:spnet_overview}, the five Block Layers share the same overall structure while differing in the number of internal Style Blocks. Each Block Layer begins with $\mathsf{DeConv}$-$\mathsf{PixelNorm}$-$\mathsf{GELU}$, where $\mathsf{DeConv}$ (transposed convolution) upsamples the spatial resolution by a factor of 2, followed by $n_b$ residual Style Blocks of the form $\mathsf{Conv}$-$\mathsf{AdaIN}$-$\mathsf{GELU}$. After these $n_b$ residual Style Blocks, one additional Style Block is applied without a residual connection. Across the five Block Layers, the feature shape is progressively transformed as
\begin{align*}
[B,~512,~4,~4]
&\xrightarrow[\text{$n_b=32$}]{\text{Block Layer 1}}
[B,~512,~8,~8] \\
&\xrightarrow[\text{$n_b=16$}]{\text{Block Layer 2}}
[B,~256,~16,~16] \\
&\xrightarrow[\text{$n_b=8$}]{\text{Block Layer 3}}
[B,~128,~32,~32]\\
&\xrightarrow[\text{$n_b=4$}]{\text{Block Layer 4}}
[B,~64,~64,~64]\\
&\xrightarrow[\text{$n_b=2$}]{\text{Block Layer 5}}
[B,~32,~128,~128].
\end{align*}
After the five Block Layers, the final feature map is projected to the RGB image space by a $3\times3$ convolution layer with output shape $[B,~3,~128,~128]$.

\begin{figure}[t]
    \centering
    \includegraphics[width=\linewidth]{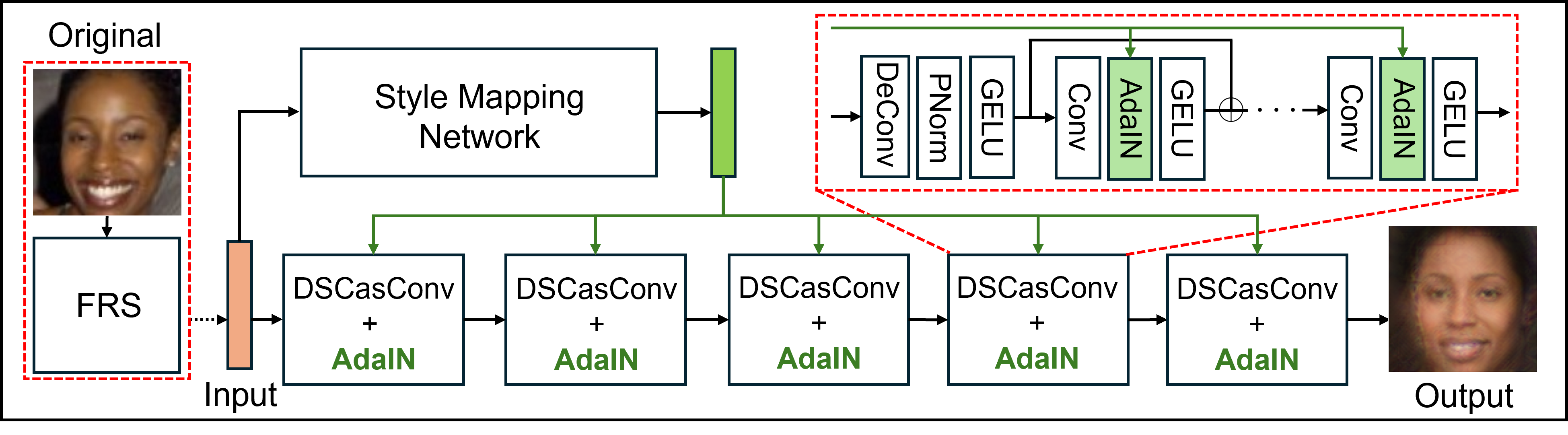}
    \vspace{-20pt}
    \caption{The design of SPNet. PNorm: Pixel-wise normalization. Best viewed in color.}
    \label{fig:spnet_overview}
    \vspace{-15pt}
\end{figure}

\section{Details on the Datasets \& FR Models}\label{app:SecD}
%

\subsubsection{Specification of the Datasets}
We provide the detailed specifications---e.g., the number of images, identities, and true/false pairs---of the datasets we used, including the evaluation datasets and the CASIA-WebFace dataset~\cite{yi2014learning}.
The details are provided in Tab.~\ref{tab:dataspec}.
We note that all the datasets used during experiments are publicly available.
In particular, for LFW~\cite{huang2008labeled}, CFP-FP~\cite{sengupta2016frontal}, and AgeDB~\cite{moschoglou2017agedb}, we used the subsets for face verification benchmarks instead of the full datasets, by following the convention in the literature.

\begin{table}[h]
    \centering
    \caption{Detailed specification of the datasets. TP/FP: True/false pairs.}    
    \vspace{-10pt}
    \setlength{\tabcolsep}{5pt}
    {\scriptsize
    \begin{tabular}{ccccc}
    \toprule
        \textbf{Datasets} & \textbf{LFW} & \textbf{CFP-FP} & \textbf{AgeDB} & \textbf{CASIA-WebFace}  \\ \midrule
        \# of Imgs & 13,233 & 7,000 & 16,488 & 490,623 \\
        \# of IDs & 5,749 & 500 & 568 & 10,572  \\
        \# of TP/FPs & 3,000/3,000 & 3,500/3,500 & 3,000/3,000 & NA  \\        
    \bottomrule    
    \end{tabular}
    }
    \label{tab:dataspec}
\end{table}

\subsubsection{Estimated Thresholds at Various FMR Levels}
We also provide the thresholds measured at various FMR levels, from $10^{-2}$ to $10^{-6}$, of the open-source FRSs ($F_{1}$--$F_{3}$) and the surrogate model ($F_{\mathsf{S}}$).
Recall that all the main experiments were focused on the $\mathrm{FMR}={10^{-6}}$.
The thresholds are provided in Tab.~\ref{tab:summary_frs_threshold}.

\begin{table}[t]
    \centering
    \caption{Estimated thresholds of open-source FRSs at various FMR levels}
    \vspace{-10pt}
    \setlength{\tabcolsep}{7pt}
    {\scriptsize
    \begin{tabular}{cccccc}
    \toprule
        \textbf{FMRs} & $\mathbf{10^{-2}}$ & $\mathbf{10^{-3}}$ & $\mathbf{10^{-4}}$ & $\mathbf{10^{-5}}$ & $\mathbf{10^{-6}}$  \\ \midrule
        $F_{\mathsf{S}}$         & 0.1416 & 0.1981 & 0.2526 & 0.3095 & 0.5422 \\ 
        $F_{1}$         & 0.1390 & 0.1932 & 0.2464 & 0.3010 & 0.5195 \\ 
        $F_{2}$         & 0.1615 & 0.2265 & 0.2881 & 0.3463 & 0.5216 \\ 
        $F_{3}$         & 0.1864 & 0.2646 & 0.3324 & 0.3942 & 0.5537 \\ 
    \bottomrule
    \end{tabular}
    }
    \label{tab:summary_frs_threshold}
    \vspace{-15pt}
\end{table}

\subsubsection{Benchmark Results of FRSs on Evaluation Datasets}
We report the true match rate (TMR) at various FMR levels. 
First, in Tab.~\ref{tab:bench_open_prot}, we provide the benchmark results of $F_{1}$--$F_{3}$ and $F_\mathsf{S}$ on the LFW, CFP-FP, and AgeDB datasets, reporting the TMR at the FMRs of $10^{-2}$ and $10^{-3}$.
In addition, to measure TMRs under tighter FMR settings, we use the estimated threshold reported in Tab.~\ref{tab:bench_open_estim}.
From these tables, we can observe that the thresholds measured in Tab.~\ref{tab:bench_open_prot} tend to be similar to or smaller than those estimated from the CASIA-WebFace dataset (i.e., Tab.~\ref{tab:summary_frs_threshold}).
This suggests that the estimated thresholds may be conservative, and the ASR may be inflated when using thresholds measured on benchmark datasets that follow the same distribution as LFW, CFP-FP, and AgeDB.

\begin{table}[h]
    \centering
    \vspace{-15pt}
    \caption{Benchmark results of open-source FRSs in evaluation datasets with thresholds. 
    We report TMR(\%) at $\mathrm{FMR}=10^{-2}$ and $10^{-3}$ based on the benchmark.
    $\tau$ denotes the threshold obtained from the benchmark.
    }
    \vspace{-10pt}
    \setlength{\tabcolsep}{4pt}
    \resizebox{\linewidth}{!}{
    \begin{tabular}{ccccccccccccc}
    \toprule
    FRSs & \multicolumn{4}{c}{LFW} & \multicolumn{4}{c}{CFP-FP} & \multicolumn{4}{c}{AgeDB} \\ \cmidrule(lr){1-1}\cmidrule(lr){2-5} \cmidrule(lr){6-9} \cmidrule(lr){10-13}
    FMRs    & $10^{-2}$ & $\tau$ & $10^{-3}$ & $\tau$ & $10^{-2}$ & $\tau$ & $10^{-3}$ & $\tau$ & $10^{-2}$ & $\tau$ & $10^{-3}$ & $\tau$  \\ \midrule
    $F_{\mathsf{S}}$ & 99.80 & 0.1446 & 99.70 & 0.1888 & 98.97 & 0.1414 & 98.74 & 0.1985 & 97.53 & 0.1786 & 97.50 & 0.1814\\
    $F_{1}$ & 99.77 & 0.1380 & 99.73 & 0.1670 & 98.89 & 0.1380 & 98.74 & 0.1837 & 97.10 & 0.1699 & 97.07 & 0.1771\\
    $F_{2}$ & 99.80 & 0.1581 & 99.73 & 0.2046 & 98.37 & 0.1594 & 98.17 & 0.1835 & 97.50 & 0.1914 & 97.10 & 0.2116\\
    $F_{3}$ & 99.77 & 0.1835 & 99.73 & 0.2500 & 98.37 & 0.1871 & 98.14 & 0.2073 & 97.50 & 0.2157 & 97.50 & 0.2161\\    
    \bottomrule
    \end{tabular}
    }
    \label{tab:bench_open_prot}
    \vspace{-15pt}
\end{table}

\begin{table}[h]
    \centering
    \caption{Benchmark results of open-source FRSs in evaluation datasets. We report TMR(\%) at the estimated thresholds at $\mathrm{FMR}=10^{-4}$, $10^{-5}$, and $10^{-6}$, respectively.}
    \vspace{-10pt}
    \setlength{\tabcolsep}{6pt}
    {\scriptsize
    \begin{tabular}{cccccccccc}
    \toprule
    FRSs & \multicolumn{3}{c}{LFW} & \multicolumn{3}{c}{CFP-FP} & \multicolumn{3}{c}{AgeDB} \\ \cmidrule(lr){1-1}\cmidrule(lr){2-4} \cmidrule(lr){5-7} \cmidrule(lr){8-10}
    FMRs    & $10^{-4}$ & $10^{-5}$ & $10^{-6}$    &  $10^{-4}$ & $10^{-5}$ & $10^{-6}$      &  $10^{-4}$ & $10^{-5}$ & $10^{-6}$      \\ \midrule
    $F_{\mathsf{S}}$ & 99.70 & 99.63 & 94.47 & 98.34 & 97.29 & 53.60 & 95.73 & 93.43 & 34.23
 \\
    $F_{1}$ & 99.67 & 99.60 & 95.43 & 98.00 & 96.34 & 53.14 & 95.03 & 91.83 & 32.03
 \\
    $F_{2}$ & 99.70 & 99.57 & 97.23 & 95.80 & 91.89 & 60.11 & 95.40 & 92.80 & 58.57
 \\
    $F_{3}$ & 99.70 & 99.53 & 96.53 & 94.06 & 90.03 & 53.54 & 94.50 & 90.83 & 51.03
 \\    
    \bottomrule
    \end{tabular}
    }
    \label{tab:bench_open_estim}
    \vspace{-15pt}
\end{table}


\section{Additional Experimental Results}\label{app:SecE}
%

\subsection{Attack Efficacy and Non-Uniformity of Templates}
Recall that, when attacking $F_{2}$ and $F_{3}$, we observe that the ASRs measured from them exceed that of the white-box attack, i.e., $F_{\mathsf{S}}$. 
We hypothesize that this phenomenon stems from the extent of non-uniformity of the embedding space; the embeddings are closer to the subspace spanned by the PCA-based OFSs compared to $F_{\mathsf{S}}$.
That is, the amount of reducing projection gap (\textbf{E2}) in $F_{2}$ and $F_{3}$ would be larger than that of $F_{\mathsf{S}}$.
Thus, if such a reduction dominates the error induced by the metric distortion (\textbf{E1}), then achieving a higher ASR than the white-box attack setting is possible.

\begin{figure}[t]
    \centering
    \includegraphics[width=\linewidth]{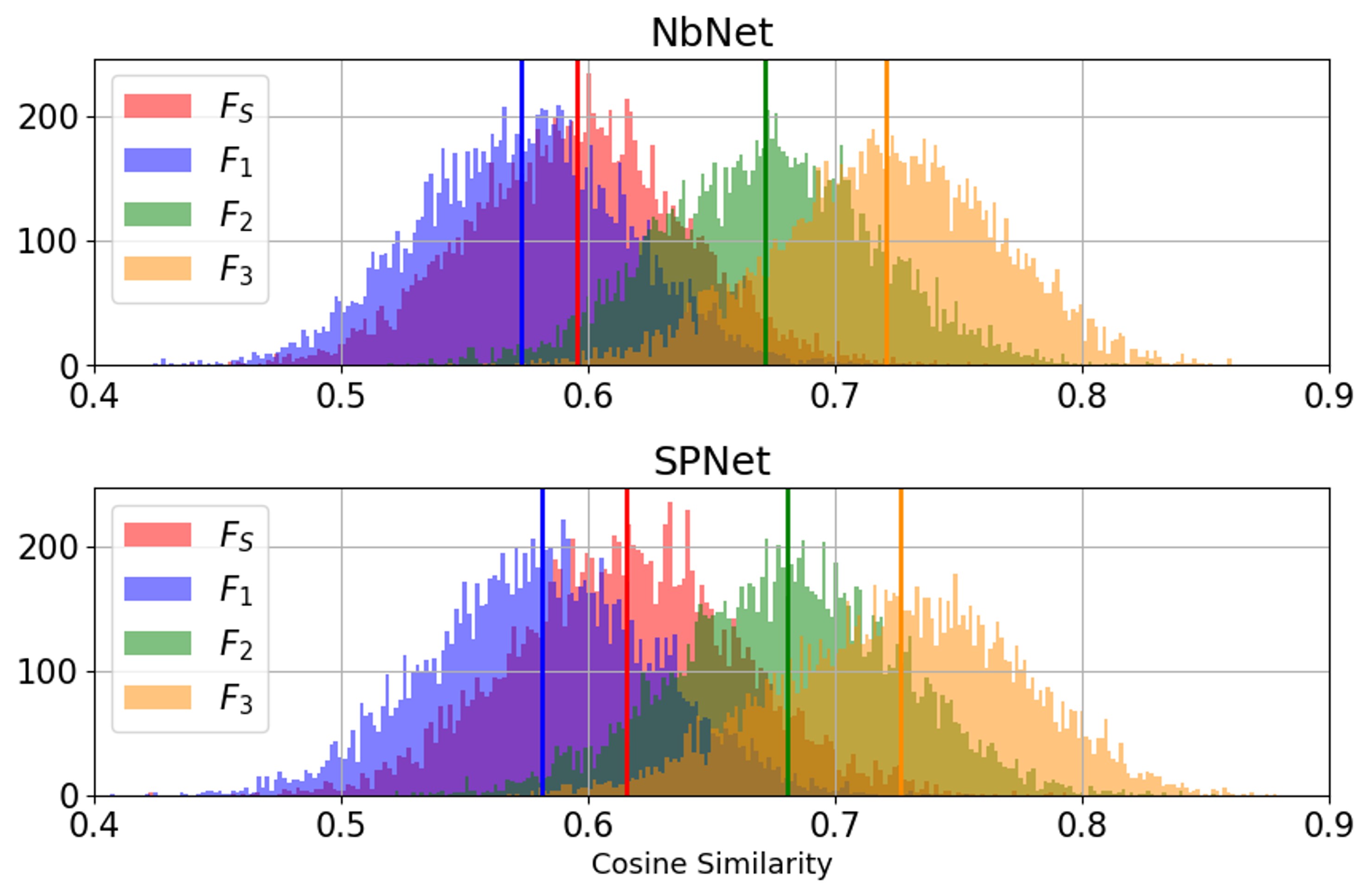}
    \caption{Projection gaps between the embeddings from the LFW dataset and the PCA-based OFS measured from various FRSs.
    The OFS is crafted from $F_{\mathsf{S}}$ with NbNet (upper) and SPNet (lower).
    Solid lines indicate the average of the cosine similarities of the projection gap. Best viewed in color.
    }
    \label{fig:proj_gap_test}
    \vspace{-10pt}
\end{figure}

To verify this hypothesis, we measure the projection gap on $F_{1}$--$F_{3}$ and $F_{\mathsf{S}}$ between the embeddings from the LFW dataset and the PCA-based OFS constructed from $F_{\mathsf{S}}$.
Following our main attack experiments, we craft the PCA-based OFS from the MS1MV3 dataset from the baseline NbNet and the proposed SPNet.
The visualization results are provided in Fig.~\ref{fig:proj_gap_test}.
We can observe that the projection gap on $F_{1}$ is larger (i.e., smaller cosine similarities) than $F_{\mathsf{S}}$, whereas the gaps on $F_{2}$ and $F_{3}$ are smaller (i.e., larger cosine similarities).
Such a tendency coincides with the observation that the ASRs on $F_{2}$ and $F_{3}$ in the black-box setting exceed that of $F_{\mathsf{S}}$ in the white-box setting, while this is not the case for $F_{1}$.

We further examine the source of these projection gap differences through the PCA singular-value spectrum. We conduct this spectral comparison within the same architecture, using $F_\mathsf{S},\ F_2,\ F_3$, and additionally include the distribution-spreading model~\cite{duan2019uniformface}, which is explicitly designed to spread the embedding spectrum. As shown in Fig.~\ref{fig:singular}, models with less concentrated singular-value spectra exhibit lower projection similarities onto the 100-dimensional OFS subspace, indicating a larger projection gap. Consistently, measurement of (\textbf{E2}) yields mean projection gaps of 0.6878, 0.7324, 0.7884, and 0.8288 for $F_3$, $F_2$, $F_{\mathsf{S}}$, and ~\cite{duan2019uniformface}, respectively. This confirms that distribution spreading weakens the PCA-based OFS by increasing the projection gap.

\begin{figure}[t]
    \centering
    \begin{subfigure}[t]{\linewidth}
        \includegraphics[width=\linewidth]{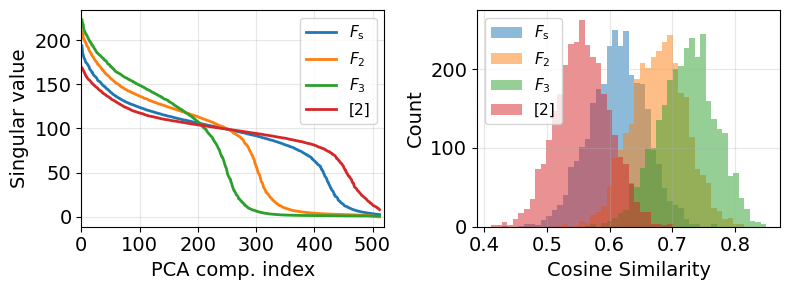}
        \vspace{-10pt}        
    \end{subfigure}
    \vfill
    \caption{The left plot shows the PCA singular-value spectra of MS1MV3~\cite{deng2020retinaface} embeddings. The right plot shows the distributions of cosine similarities between LFW embeddings and their projections onto the OFS subspace.}
    \label{fig:singular}
    \vspace{-10pt}
\end{figure}

\subsection{Additional ASR Results on Open-Source FRSs}
We provide additional comparisons on the ASR results against open-source FRSs when $\mathrm{FMR}=10^{-4}, 10^{-5}$, and $10^{-6}$. 
In particular, since attacking $F_{\mathsf{S}}$ is the white-box attack, we do not apply the correction matrix during comparison.
The results are provided in Tab.~\ref{tab:ablation_merged}.
We can observe that the proposed techniques progressively improve the ASR, except for the correction matrix; it even degrades the ASR, unlike the case of attacking the AWS CompareFace ($F_{\mathsf{A}}$).
We suspect that this is because the correction matrix does not reduce the metric distortion for these open-source FRSs.
Nevertheless, our techniques improve the ASR from 32.06\% to 74.43\% compared to the baseline, showing their effectiveness.

\begin{table*}[!t]
\centering
\caption{ASRs (\%) on benchmark datasets against $F_{\mathsf{S}}$ and $F_{1}$--$F_{3}$. Baseline corresponds to ``NbNet'' without our techniques. C: correction matrix. P: PCA-based OFS.}
\vspace{-10pt}
\setlength{\tabcolsep}{3pt}
\begin{subtable}{\linewidth}
\centering
{\scriptsize
\begin{tabular}{l ccc ccc ccc}
\toprule
\multicolumn{1}{c}{Used Tools/Tech.} & \multicolumn{3}{c}{LFW} & \multicolumn{3}{c}{CFP-FP} & \multicolumn{3}{c}{AgeDB} \\
\cmidrule(lr){2-4} \cmidrule(lr){5-7} \cmidrule(lr){8-10}
FMR & $10^{-4}$ & $10^{-5}$ & $10^{-6}$ & $10^{-4}$ & $10^{-5}$ & $10^{-6}$ & $10^{-4}$ & $10^{-5}$ & $10^{-6}$ \\
\midrule
Baseline~\cite{kim2024scores} & 99.80 & 96.77 & 16.77 & 99.80 & 96.91 & 14.54 & 99.87 & 98.17 & 18.73   \\
\midrule 
NbNet+P & 100 & 99.97 & 43.47 & 100 & 100 & 50.43 & 100 & 100 & 55.03   \\
Arc2Face+P & 99.97 & 99.23 & 15.53 & 99.91 & 99.49 & 17.51 & 100 & 99.57 & 16.00
 \\
\midrule 
SPNet+P & 100 & 100 & \textbf{76.83} & 100 & 100 & \textbf{81.63} & 100 & 100 & \textbf{87.03} \\
\bottomrule
\end{tabular}
}
\caption{$F_{\mathsf{S}}$}
\end{subtable}


\begin{subtable}{\linewidth}
\centering
{\scriptsize
\begin{tabular}{l ccc ccc ccc}
\toprule
\multicolumn{1}{c}{Used Tools/Tech.} & \multicolumn{3}{c}{LFW} & \multicolumn{3}{c}{CFP-FP} & \multicolumn{3}{c}{AgeDB} \\
\cmidrule(lr){2-4} \cmidrule(lr){5-7} \cmidrule(lr){8-10}
FMR & $10^{-4}$ & $10^{-5}$ & $10^{-6}$ & $10^{-4}$ & $10^{-5}$ & $10^{-6}$ & $10^{-4}$ & $10^{-5}$ & $10^{-6}$ \\
\midrule
Baseline & 88.93 & 63.97 & 0.57 & 89.77 & 65.83 & 0.37 & 93.57 & 70.53 & 1.00   \\
\midrule 
NbNet+C & 87.13 & 61.43 & 0.57  & 88.31 & 64.09 & 0.60 & 92.06 & 68.57 & 0.87   \\
NbNet+P & 99.86 & 96.23 & 10.37   & 99.78 & 98.00 & 15.66 & 99.97 & 98.53 & 19.43   \\
\midrule 
NbNet+CP             & 99.73 & 96.00 & 11.47 & 99.83 & 97.63 & 15.74 & 100 & 98.73 & 21.40 \\
Arc2Face+CP & 96.73 & 83.73 & 2.43 & 98.51 & 89.26 & 4.00 & 98.30 & 90.37 & 2.77 \\
\midrule 
SPNet+CP & 99.96 & 99.63 & \textbf{32.63} & 100 & 99.91 & \textbf{37.20} & 100 & 99.90 & \textbf{50.33} \\
\bottomrule
\end{tabular}
}
\caption{$F_{1}$}
\end{subtable}


\begin{subtable}{\linewidth}
\centering
{\scriptsize
\begin{tabular}{l ccc ccc ccc}
\toprule
\multicolumn{1}{c}{Used Tools/Tech.} & \multicolumn{3}{c}{LFW} & \multicolumn{3}{c}{CFP-FP} & \multicolumn{3}{c}{AgeDB} \\
\cmidrule(lr){2-4} \cmidrule(lr){5-7} \cmidrule(lr){8-10}
FMR & $10^{-4}$ & $10^{-5}$ & $10^{-6}$ & $10^{-4}$ & $10^{-5}$ & $10^{-6}$ & $10^{-4}$ & $10^{-5}$ & $10^{-6}$ \\
\midrule
Baseline & 98.83 & 91.57 & 24.13 & 98.83 & 91.09 & 18.66 & 99.10 & 93.47 & 19.97   \\
\midrule 
NbNet+C & 98.17 & 90.40 & 22.20 & 97.86 & 88.71 & 19.26 & 98.53 & 91.50 & 18.77   \\
NbNet+P & 100 & 99.93 & 69.13 & 100 & 99.94 & 74.66 & 100 & 100 & 75.10   \\
\midrule 
NbNet+CP             & 100 & 99.83 & 69.43 & 100 & 99.94 & 74.69 & 100 & 99.97 & 75.97 \\
Arc2Face+CP & 99.27 & 93.27 & 19.87 & 99.26 & 95.57 & 24.74 & 99.23 & 94.60 & 19.30 \\
\midrule 
SPNet+CP & 100 & 100 & \textbf{89.27} & 100 & 99.94 & \textbf{93.09} & 100 & 100 & \textbf{93.47} \\
\bottomrule
\end{tabular}
}
\caption{$F_{2}$}
\end{subtable}


\begin{subtable}{\linewidth}
\centering
{\scriptsize
\begin{tabular}{l ccc ccc ccc}
\toprule
\multicolumn{1}{c}{Used Tools/Tech.} & \multicolumn{3}{c}{LFW} & \multicolumn{3}{c}{CFP-FP} & \multicolumn{3}{c}{AgeDB} \\
\cmidrule(lr){2-4} \cmidrule(lr){5-7} \cmidrule(lr){8-10}
FMR & $10^{-4}$ & $10^{-5}$ & $10^{-6}$ & $10^{-4}$ & $10^{-5}$ & $10^{-6}$ & $10^{-4}$ & $10^{-5}$ & $10^{-6}$ \\
\midrule
Baseline & 97.13 & 86.47 & 20.50 & 97.09 & 83.83 & 17.31 & 96.87 & 84.20 & 16.87   \\
\midrule 
NbNet+C & 94.67 & 81.30 & 20.00 & 94.86 & 78.86 & 18.34 & 93.57 & 78.10 & 15.80   \\
NbNet+P & 99.97 & 99.67 & 66.57 & 100 & 99.94 & 69.49 & 100 & 100 & 69.17   \\
\midrule 
NbNet+CP             & 99.97 & 99.47 & 64.53 & 100 & 99.71 & 68.23 & 100 & 99.83 & 67.43 \\
Arc2Face+CP & 97.37 & 87.73 & 18.70 & 97.89 & 88.20 & 21.23 & 97.07 & 87.47 & 18.40 \\
\midrule 
SPNet+CP & 100 & 99.93 & \textbf{84.93} & 100 & 100 & \textbf{86.71} & 100 & 100 & \textbf{88.13} \\
\bottomrule
\end{tabular}
}
\caption{$F_{3}$}
\end{subtable}
\label{tab:ablation_merged}
\vspace{-20pt}
\end{table*}

In addition, we provide the visualization of the cosine similarities of true/false pairs, the CFP-FP and AgeDB datasets, with those from the proposed attack (e.g., see Fig.~3 of the main text).
The results are visualized in Fig.~\ref{fig:opensource_cfpage}.
In contrast to the results in the LFW dataset, the cosine similarity values from our attack tend to exceed those between true pairs.
This is because of the CFP-FP and AgeDB datasets' relatively larger intra-class variations compared to the LFW; the former two datasets consist of hard pairs of facial images, e.g., faces of the same identity but different pose or age.

\begin{figure}[t]
    \centering
    \begin{subfigure}[t]{\linewidth}
        \includegraphics[width=\linewidth]{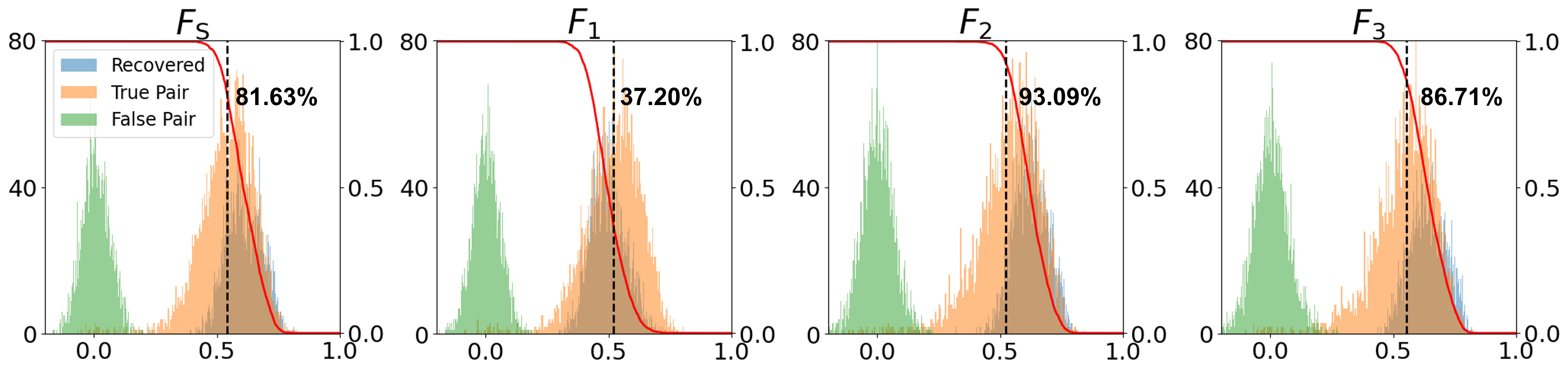}
        \vspace{-10pt}        
        \caption{CFP-FP}
    \end{subfigure}
    \vfill
    \begin{subfigure}[t]{\linewidth}
        \includegraphics[width=\linewidth]{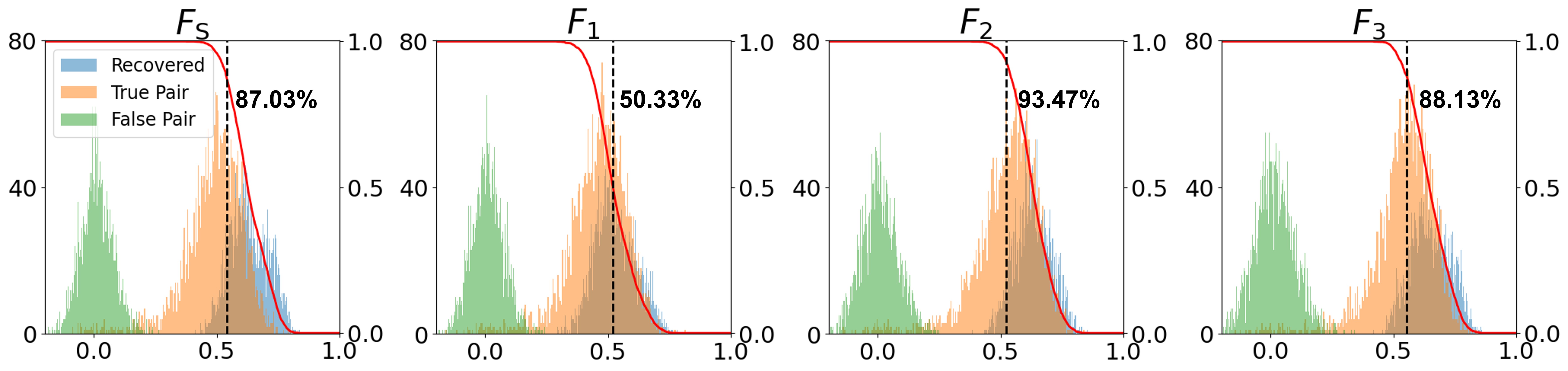}
        \vspace{-10pt}
        \caption{AgeDB}
    \end{subfigure}
    \vspace{-10pt}
    \caption{Attack results of ours on various open-source FRSs in the CFP-FP and AgeDB datasets. Best viewed in color.}
    \label{fig:opensource_cfpage}
    \vspace{-10pt}
\end{figure}

\subsection{Additional Ablation Studies on Open-Source FRSs}
As in Fig.~5 of the main text for $F_{\mathsf{A}}$, we provide ablation study results on open-source FRSs $F_{1}$--$F_{3}$ and $F_{\mathsf{S}}$.
We first provide the query sweep analysis result on various FMR levels, including $10^{-4}$, $10^{-5}$, and $10^{-6}$.
The results are visualized in Fig.~\ref{fig:abstudy_query}.
Similar to $F_{\mathsf{A}}$, we can observe that 50--60 queries are sufficient for achieving non-trivial ASRs across all attack scenarios and FMR levels.

In addition, we also provide the trajectories of the ASR with respect to the thresholds, progressively adding our techniques to reduce errors (\textbf{E1})--(\textbf{E3}).
The results are provided in Fig.~\ref{fig:abstudy_opensource}.
As we can expect from the result of Tab.~\ref{tab:ablation_merged}, the effect of the correction matrix is marginal across all the experimental settings, while the adoption of the PCA-based OFS and the proposed SPNet shows a sharp improvement on the ASR.

\begin{figure}
    \centering
    \begin{subfigure}[t]{\linewidth}
        \centering        
        \includegraphics[width=.95\linewidth]{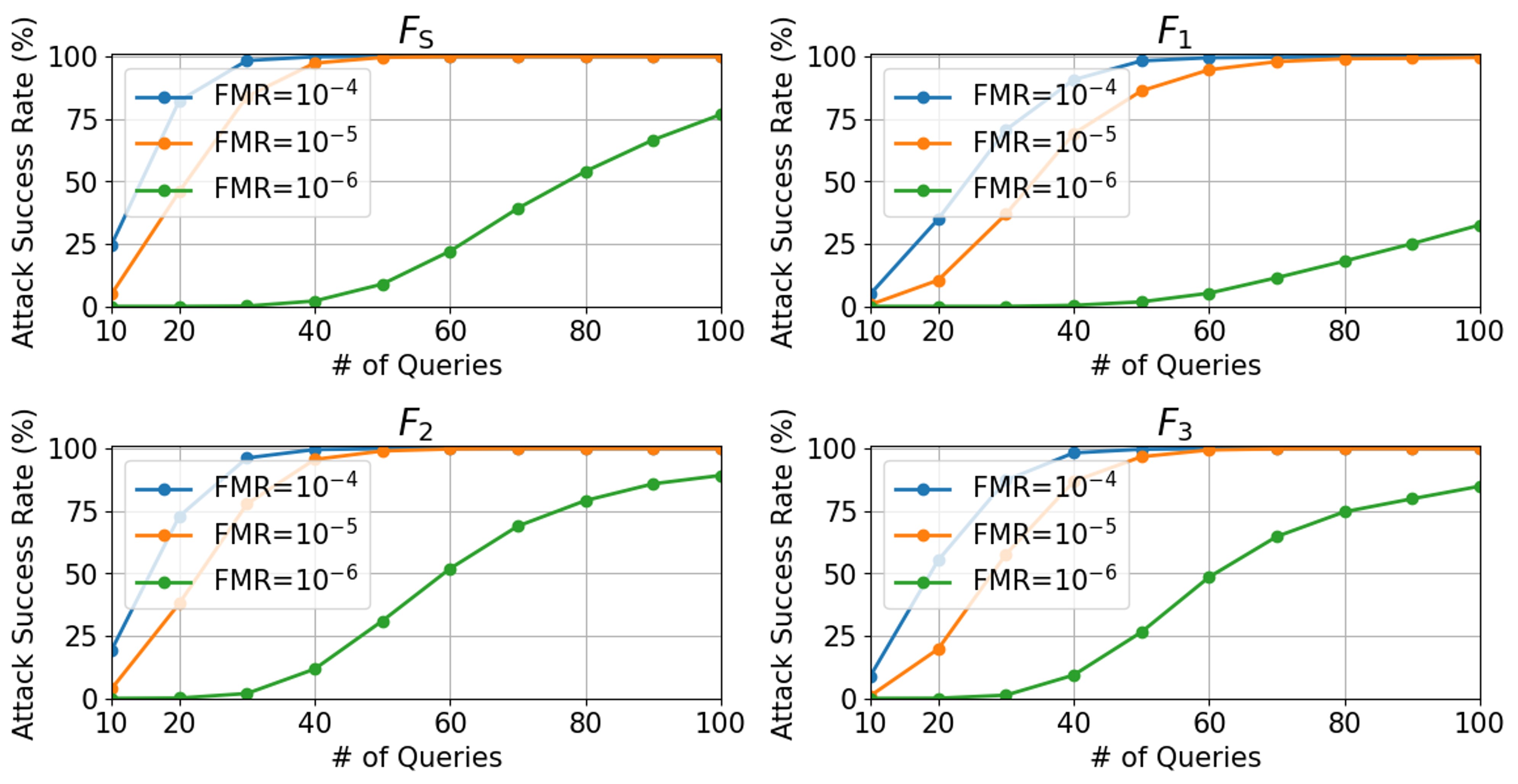}
        \vspace{-7pt}
        \caption{LFW}
    \end{subfigure}
    \vfill
    \begin{subfigure}[t]{\linewidth}
        \centering        
        \includegraphics[width=.95\linewidth]{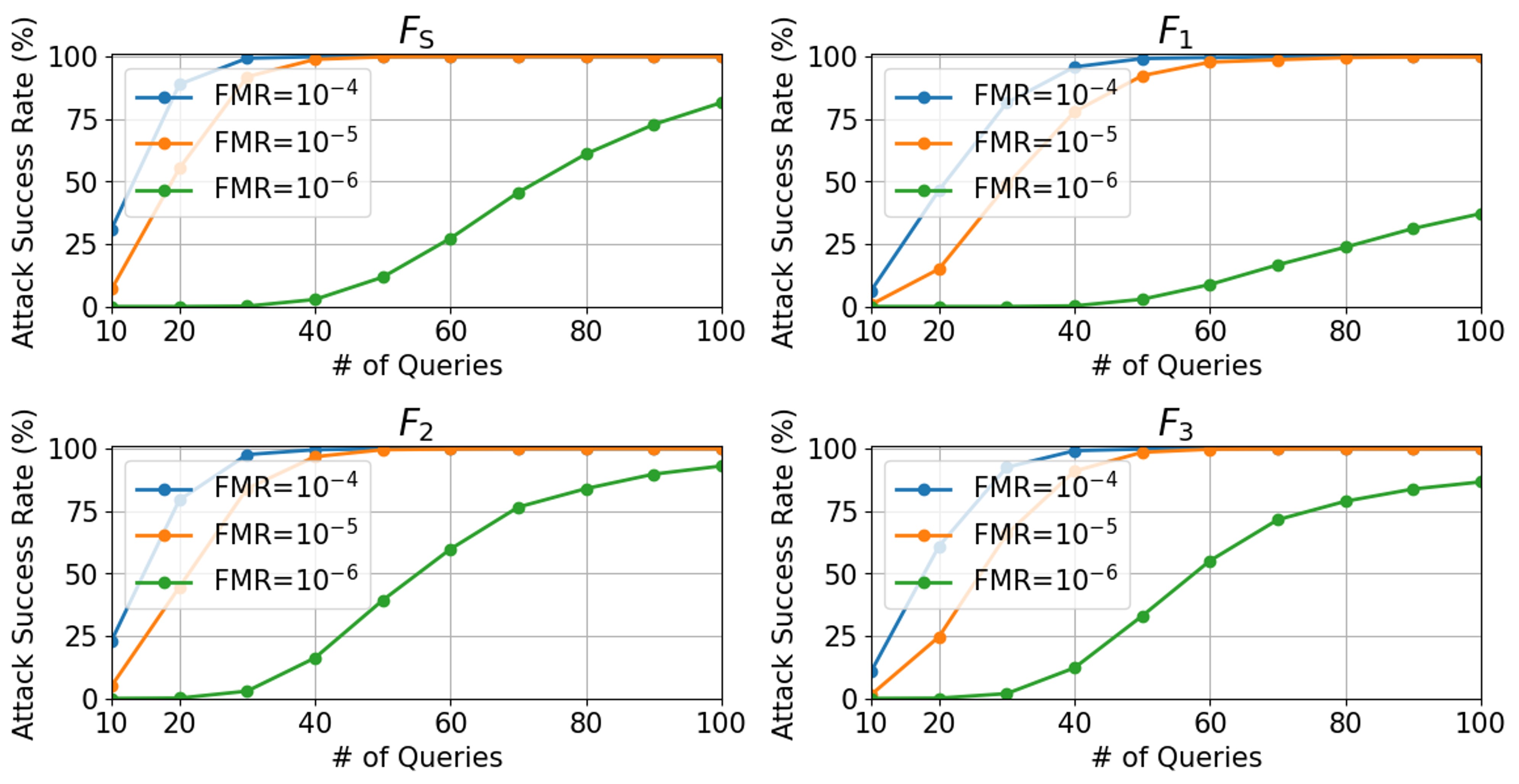}
        \vspace{-7pt}
        \caption{CFP-FP}
    \end{subfigure}
    \vfill
    \begin{subfigure}[t]{\linewidth}
        \centering        
        \includegraphics[width=.95\linewidth]{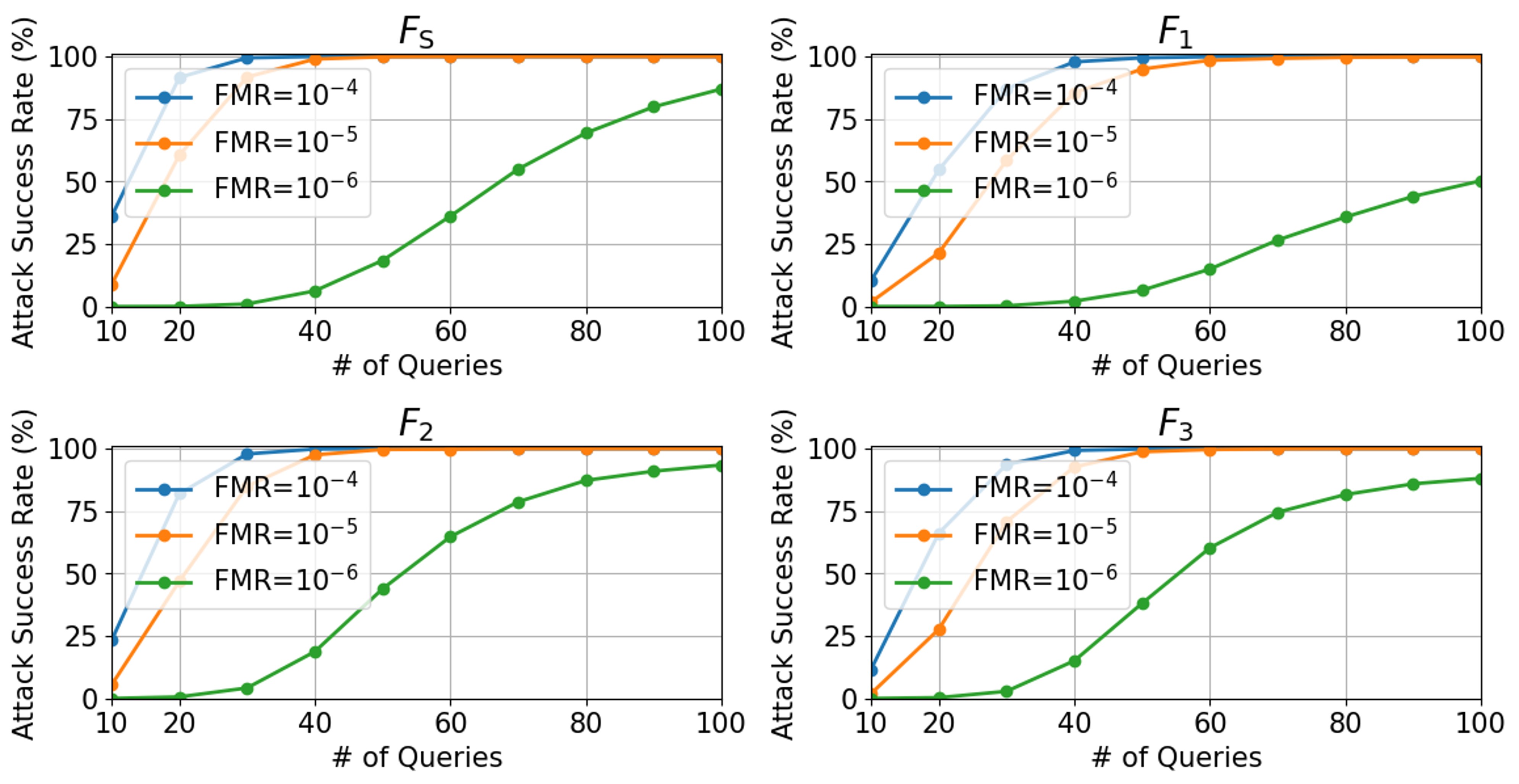}
        \vspace{-7pt}
        \caption{AgeDB}
    \end{subfigure}        
    \vspace{-10pt}
    \caption{Query sweep analysis results on open-source FRSs. Best viewed in color.}
    \label{fig:abstudy_query}
\end{figure}

\begin{figure}
    \centering
    \begin{subfigure}[t]{\linewidth}
        \centering        
        \includegraphics[width=.95\linewidth]{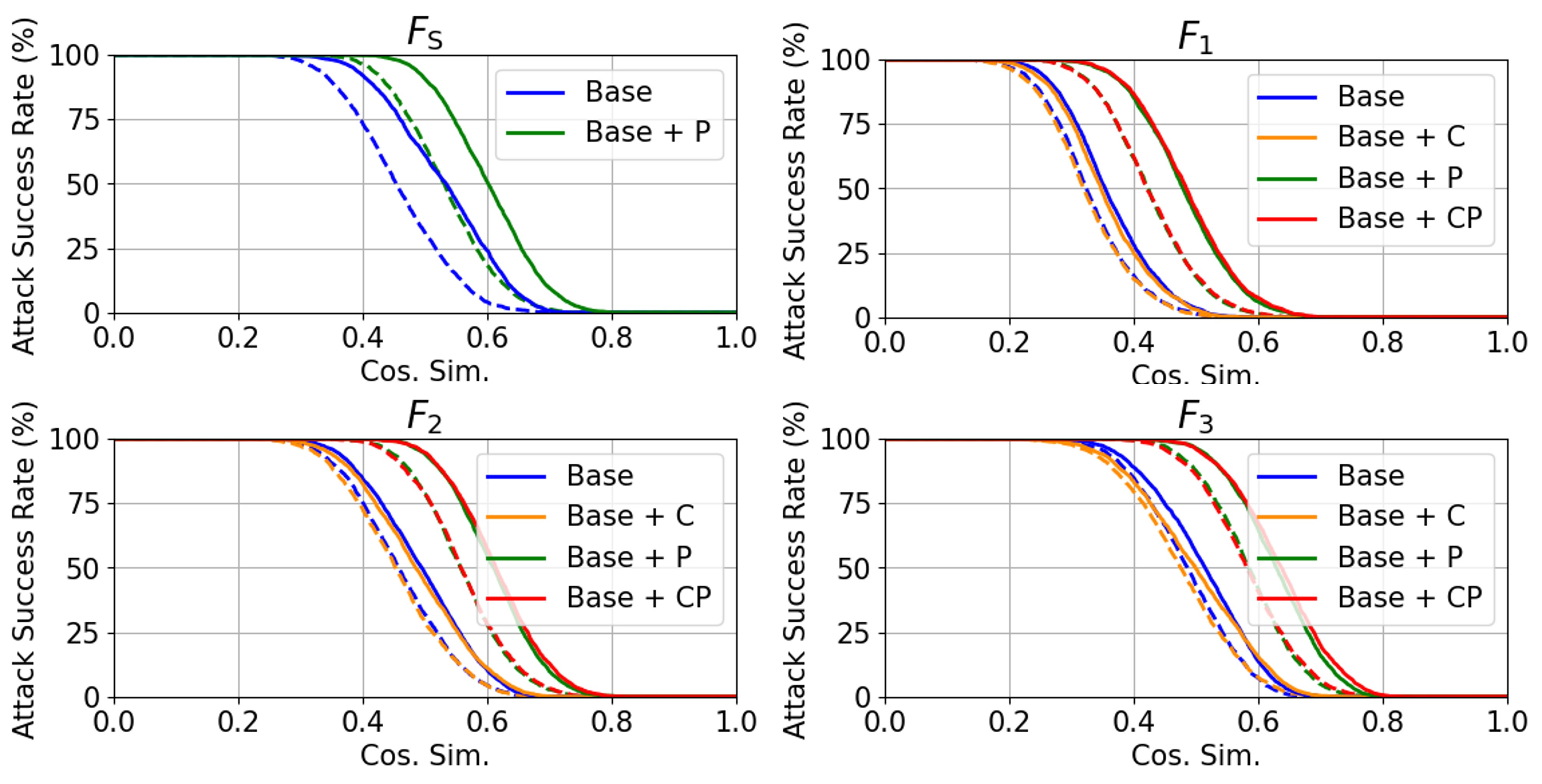}
        \vspace{-7pt}
        \caption{LFW}
    \end{subfigure}
    \vfill
    \begin{subfigure}[t]{\linewidth}
        \centering        
        \includegraphics[width=.95\linewidth]{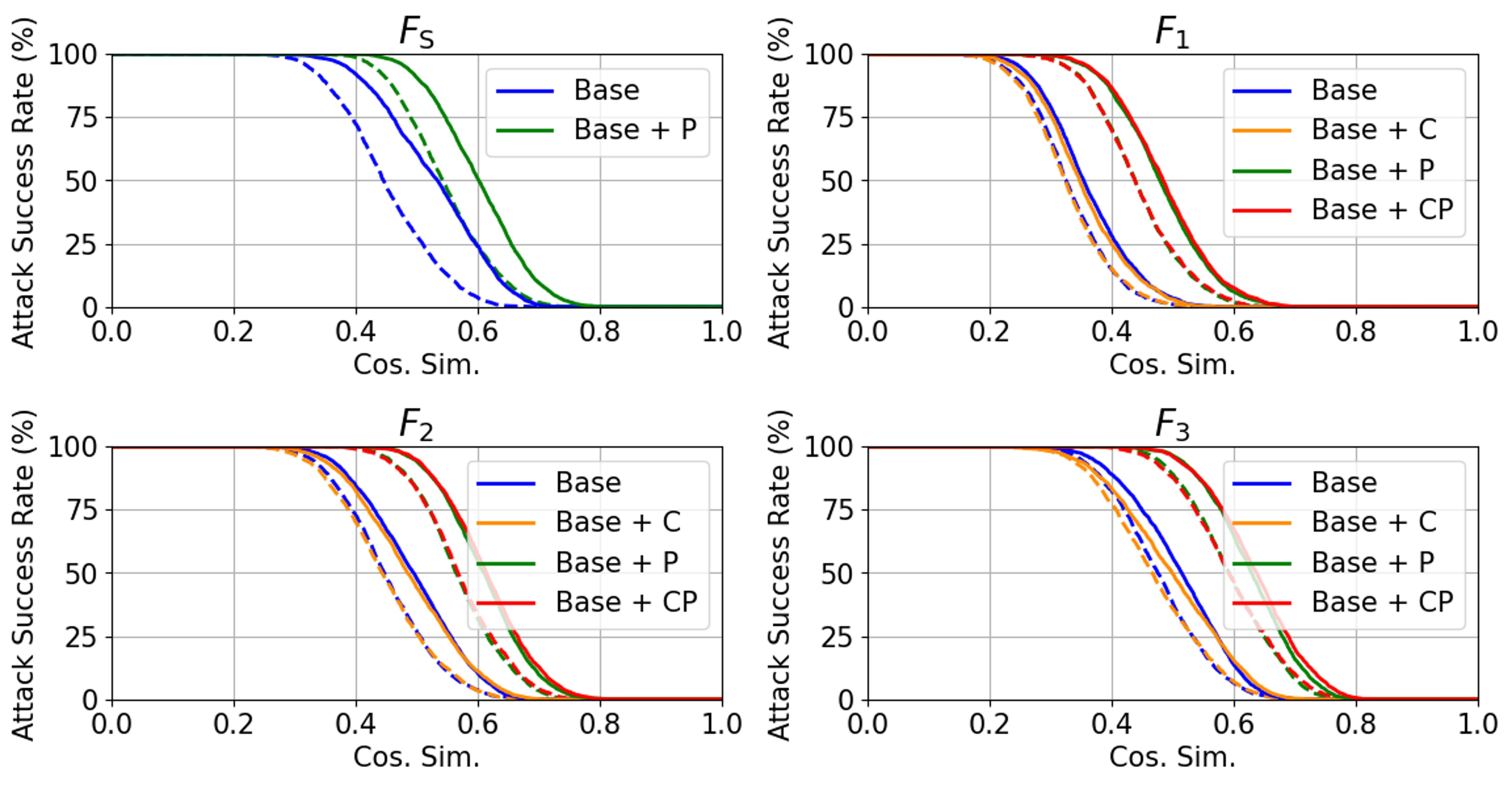}
        \vspace{-7pt}
        \caption{CFP-FP}
    \end{subfigure}
    \vfill
    \begin{subfigure}[t]{\linewidth}
        \centering        
        \includegraphics[width=.95\linewidth]{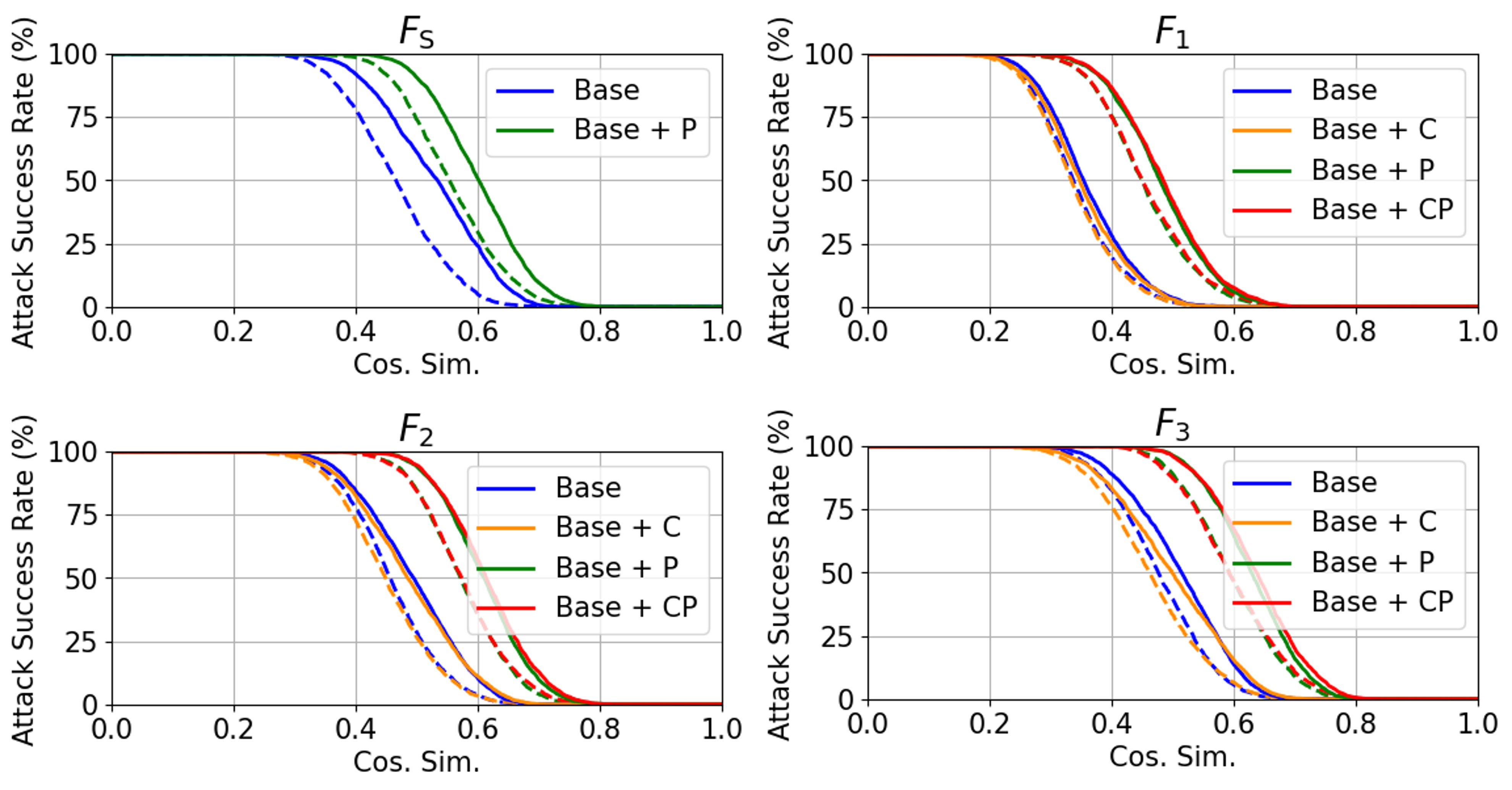}
        \vspace{-7pt}
        \caption{AgeDB}
    \end{subfigure}        
    \vspace{-10pt}
    \caption{Ablation study results on open-source FRSs. Solid/dashed lines denote that SPNet and NbNet are used as inverse models, respectively. Best viewed in color.}
    \label{fig:abstudy_opensource}
\end{figure}

\subsection{Additional Attack Visualization Results}
Fig.~\ref{fig:all_aws_att_img_lfw} visualizes the faces reconstructed from score-based attacks against ${F}_{\mathsf{A}}$. 
Target images are shown in the first row, and the rows below show reconstructed attack images obtained using the baseline and different inverse models and attack configurations. 
All target images are sampled from the LFW dataset.
We can observe that the faces recovered from the baseline attack barely surpass the strict threshold (99), even at the default threshold of 80.
As we progressively apply our technique, the ASR drastically increases, even when our new inverse model is not yet employed.
When combining the correction matrix and PCA-based OFS with the Arc2Face~\cite{papantoniou2024arc2face}, almost half of the samples exceed the high-security threshold.
Finally, with the proposed SPNet, all the recovered faces are authenticated as the targets.
Notably, all the recovered faces barely resemble the target ones, though the confidence scores are substantially different.
This stems from the fact that all the attacks are designed to enforce closeness over the embedding space, rather than the image space from the human's view.

%

\begin{figure}[t]
  \centering
  \includegraphics[width=\linewidth]{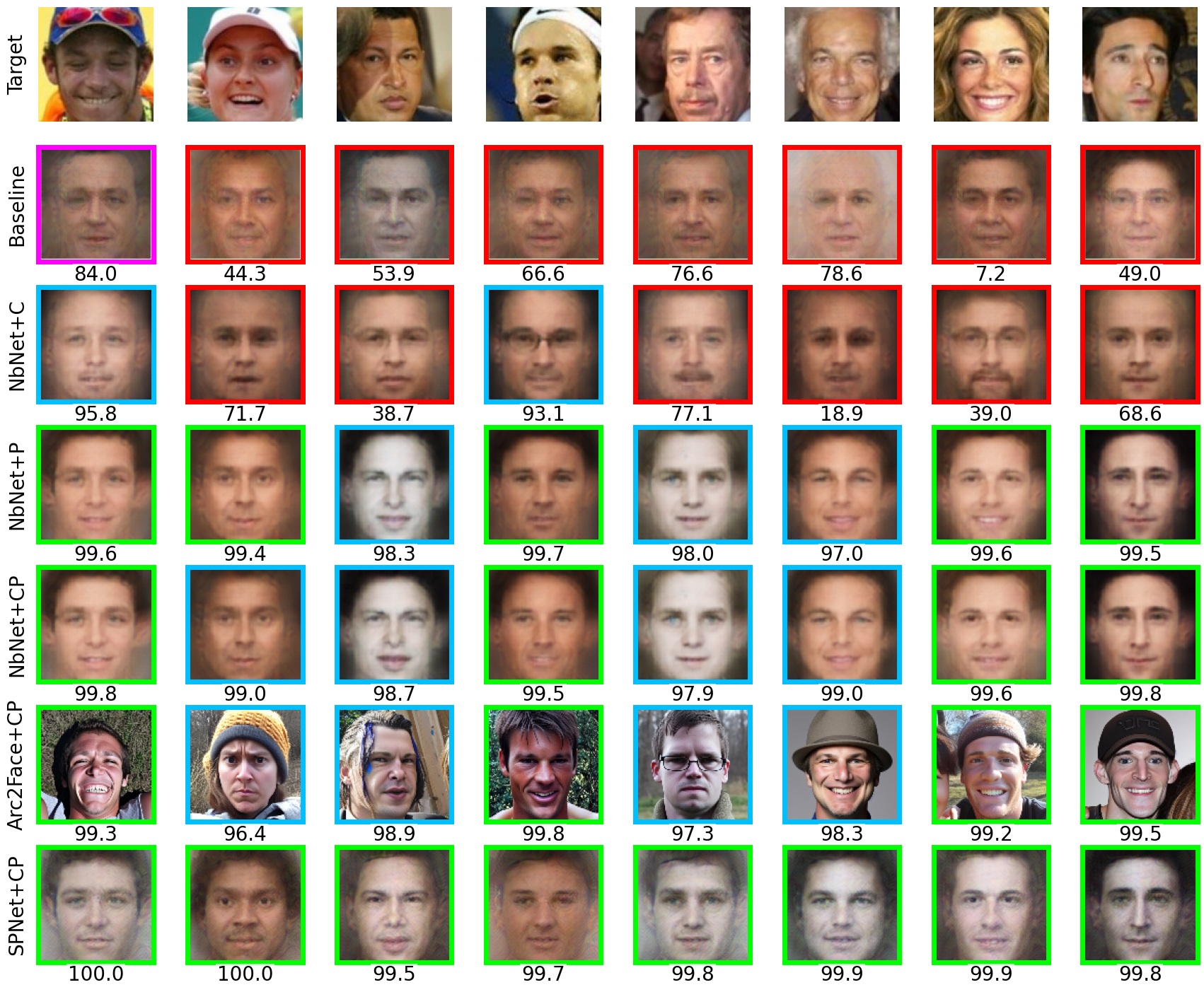}
  \vspace{-15pt}
  \caption{Attack results against $F_{\mathsf{A}}$ on the LFW dataset.
  Box colors indicate the confidence score against the target image in the first row: green for scores $> 99$, skyblue for scores $> 90$, purple for scores $> 80$, and red otherwise. The number below each reconstructed image denotes the returned confidence score.
  }
  \label{fig:all_aws_att_img_lfw}
  \vspace{-10pt}
\end{figure}



\subsection{Additional Baseline Comparison}
We consider three additional score-based attacks~\cite{razzhigaev2021darker,razzhigaev2025inverting,park2023towards} against $F_{\mathsf{S}}$ and assess their ASRs under a 100-query constraint. For~\cite{razzhigaev2021darker}, we refer to the query-wise ASR analysis reported in~\cite{kim2024scores}, where the curve remains clearly below 10\% at the best-accuracy threshold even with around 3,000 queries. For~\cite{razzhigaev2025inverting}, after zero initialization, we perform optimization using 100 score queries and observe an ASR of 0.49 $\pm$ 0.02\% at the best-accuracy threshold over 10 runs. For~\cite{park2023towards}, which allocates 2,000 queries each to initialization and reconstruction, we follow the same ratio under the 100-query budget using 50 initialization and 50 reconstruction queries, yielding an ASR of 1.22 $\pm$ 0.02\% at $\mathrm{FMR}=10^{-4}$ over three runs. 

Based on these observations, these attacks are not competitive with our attack under the 100-query budget at $\mathrm{FMR}=10^{-6}$. Public implementations are available for~\cite{razzhigaev2025inverting,park2023towards}.
\footnote{Public implementations of~\cite{park2023towards} and~\cite{razzhigaev2025inverting} are available at \url{https://github.com/1ho0jin1/Black-box-Face-Reconstruction} and \url{https://github.com/FusionBrainLab/AdversarialFaces}, respectively.}

\end{document}